\newcommand{\reffig}[1]{Fig. \ref{#1}}
\newcommand{\refeq}[1]{(\ref{#1})}
\renewcommand\arraystretch{1.5}
\newcounter{ct}
\newtheorem{proposition}{Proposition}
\begin{document}
%
\title{	Cluster Head Detection for Hierarchical UAV Swarm With Graph Self-supervised Learning }
%
%
%
%

\author{Zhiyu Mou, Jun Liu, Xiang Yun, Feifei Gao, and Qihui Wu 
\IEEEcompsocitemizethanks{\IEEEcompsocthanksitem Z. Mou, F. Gao are with the Institute for Artificial Intelligence, Tsinghua University (THUAI), State Key Lab of Intelligence Technologies and Systems, Tsinghua University, Beijing National Research Center for Information Science and Technology (BNRist), Department of Automation, School of Information Science and Technology, Tsinghua University, Beijing 100084, China. 
E-mail: mouzy20@mails.tsinghua.edu.cn, feifeigao@ieee.org.
\IEEEcompsocthanksitem J. Liu is with the Institute of Network Sciences and Cyberspace, Tsinghua University, Beijing 100084, China, and also with the Beijing National Research Center for Information Science and Technology (BNRist), Tsinghua University, Beijing 100084, China. E-mail: juneliu@tsinghua.edu.cn.

\IEEEcompsocthanksitem X. Yun, technical and innovation director/head of standard, is with Baicells Technologies Co., Ltd. E-mail: yunxiang@baicells.com.

\IEEEcompsocthanksitem Q. Wu is with the College of Electronic and Information Engineering, Nanjing University of Aeronautics and Astronautics, Nanjing 210016, China. E-mail: wuqihui2014@sina.com.

}
\thanks{Manuscript received Mar. 7, 2022.}}

%
%

\markboth{peer review papers}%
{peer review papers}
%



\IEEEtitleabstractindextext{%
\begin{abstract}
In this paper, we study the cluster head detection problem of a two-level unmanned aerial vehicle (UAV) swarm network (USNET) with multiple UAV clusters, where the inherent follow strategy (IFS) of low-level follower UAVs (FUAVs) with respect to high-level cluster head UAVs (HUAVs) is unknown. We first propose a graph attention self-supervised learning algorithm (GASSL) to detect the HUAVs of a single UAV cluster,
where the GASSL can fit the IFS at the same time. 
Then, to detect the HUAVs in the USNET with multiple UAV clusters, we develop a multi-cluster graph attention self-supervised learning algorithm (MC-GASSL) based on the GASSL. The MC-GASSL clusters the USNET with a gated recurrent unit (GRU)-based metric learning scheme and finds the HUAVs in each cluster with GASSL. Numerical results show that the GASSL can detect the HUAVs in single UAV clusters obeying various kinds of IFSs with over 98\% average accuracy. The simulation results also show that the clustering purity of the USNET with MC-GASSL exceeds that with traditional clustering algorithms by at least 10\% average. Furthermore, the MC-GASSL can efficiently detect all the HUAVs in USNETs with various IFSs and cluster numbers with low detection redundancies.
\end{abstract}

\begin{IEEEkeywords}
	Cluster head detection, graph attention network, self-supervised learning, hierarchical UAV swarm.
\end{IEEEkeywords}}

\maketitle

\IEEEdisplaynontitleabstractindextext

%
\IEEEpeerreviewmaketitle

\IEEEraisesectionheading{\section{Introduction}\label{sec:introduction}}

%
%
%
%
\IEEEPARstart{U}{nmanned} aerial vehicle (UAV) swarm network (USNET) composed of several hierarchical UAV clusters usually has structural advantages over flat UAV swarms in many aspects, including collective management, communication efficiency, and labor divisions \cite{uav_swarm}. As a result, USNET has become a critical technology to a broad of UAV-aided scenarios, such as data collections \cite{data_collection_1, data_collection_2}, area coverage \cite{area_coverage} and securities \cite{security}. 
Generating behaviors of UAVs is an important part in the USNET technique and has been investigated in many literatures. For example, the authors in \cite{data_collection_1, area_coverage, tra_1, tra_2, tra_3} have studied the trajectory planning algorithms of USNET exhaustively in various scenes, while \cite{data_1,data_2,data_3} and \cite{charging_1, charging_2} have proposed efficient methods on data transmissions and charging schedules for the USNET, respectively. Note that apart from the UAVs themselves, there usually exists another side of agents in practical applications of the USNET technique that respond to the behaviors of UAVs, such like ground users, clients or defenders \cite{defenders}. We name the other side of agents as \emph{observers} for convenience.
As high-level cluster head UAVs (HUAVs) often act as control and communication centers for low-level follower UAVs (FUAVs) in each UAV cluster,
one of the prerequisites for observers is to detect the HUAVs of the USNET \cite{hie}. 
In fact, detecting the HUAVs is urgently needed in many practical scenes, especially in those involving offensive and defensive confrontations \cite{confrontation}. 
However, to the best of our knowledge, few literatures have deeply investigated the cluster head detection of USNET from the perspective of observers.

The cluster head detection of USNET is a complex and even abstruse problem that faces many challenges \cite{survey_cd}. The first challenge lies in the indistinguishability of the appearance of UAVs from different levels \cite{appearance}. Specifically, the observers sometimes cannot tell the difference between HUAVs and FUAVs exclusively from exterior. They can only detect the HUAVs by mining the patterns of UAVs' flying behaviors, such as positions, speeds, accelerations, etc.
The second challenge lies in the node heterogeneities caused by the hierarchy \cite{hetero}. As HUAVs take the lead of FUAVs, they usually have different movement patterns from FUAVs. Hence, HUAVs and FUAVs should be viewed as two distinct types of UAV nodes. In this way, the USNET becomes a heterogeneous network that is more difficult to tackle compared with the homogeneous networks \cite{hete}. 
The third challenge is that the inherent behavior relationship between FUAVs and HUAVs is unknown to observers. Note that there usually exists an inherent follow strategy (IFS) that determines the flying behaviors of FUAVs with respect to HUAVs. The observers, however, cannot obtain the IFS in advance, especially in confrontation scenarios \cite{confrontation}. Hence, the specific features of IFS cannot be utilized in designing the cluster head detection methods, which makes the problem more difficult.
In addition, the UAVs are constantly moving in three-dimensional (3D) spaces, making the USNET a dynamic graph instead of a static one. Therefore, the features of each UAV are composed of sequential position and speed vectors with various lengths, which increases the difficulties in finding the HUAVs in the USNET. 
Moreover, the number of UAV clusters in the USNET is usually not available to observers, which formulates another challenge in the cluster head detection problem.

In this paper, we study the cluster head detection problem of a two-level USNET with multiple clusters, where the IFS of the FUAVs with respect to the HUAVs is unknown. Firstly, we propose a graph attention self-supervised learning algorithm (GASSL) to detect the HUAVs of a single cluster. 
The GASSL can detect the HUAVs through calculating the attention values of each UAV received from other UAVs, while approximately fit the IFS at the same time. We prove that the GASSL satisfies the necessary condition of finding the HUAVs.
Numerical results show that the GASSL can find the HUAVs under various kinds of IFSs with over 98\% average accuracy. 
Then, for the USNET with multiple number of UAV clusters, we develop a multi-cluster graph attention self-supervised learning algorithm (MC-GASSL) based on the GASSL, which clusters the USNET with a gated recurrent unit (GRU)-based metric learning scheme and find the HUAVs in each UAV cluster with GASSL. 
The simulation results show that the clustering purity of the USNET with MC-GASSL exceeds that with traditional clustering algorithms by at least 10\% average. Furthermore, the MC-GASSL can efficiently find all the HUAVs in USNETs with various IFSs and cluster numbers with low detection redundancies.

The rest parts of this paper are organized as follows. Section \ref{section:system_model} presents the system models of the cluster head detection problem of the USNET. Section \ref{section:GASSL} describes the proposed GASSL algorithm for USNETs with single UAV clusters, while Section \ref{section:MC-GASSL} focuses on the MC-GASSL algorithm for the USNET composed of multiple UAV clusters. Simulation results and analysis are provided in Section \ref{section:simulations}, and conclusions are made in Section \ref{section:conclusions}. The abbreviations are summarized in Table \ref{table:abbreviations}. 

\begin{table}[t]
	\centering
	\caption{{The Summarization of Abbreviations.}}
	\label{table:abbreviations}
	\begin{tabular}{c|c}
		\specialrule{0em}{1pt}{1pt}
		\hline
		\rowcolor[gray]{0.9}
		\small\textbf{{Abbreviations}}&\small \textbf{{Full Name}}\\
		\hline
		\small\makecell[c]{{UAV}}&\small\makecell[c]{{unmanned aerial vehicle}}\\
		\hline
		\small\makecell[c]{{USNET}}&\small\makecell[c]{{unmanned aerial vehicle swarm network} }\\
		\hline
		\small\makecell[c]{{HUAV}}&\small\makecell[c]{{high-level cluster head UAV} }\\
		\hline
		\small\makecell[c]{{FUAV}}&\small\makecell[c]{{low-level follower UAV}}\\
		\hline
		\small\makecell[c]{{GASSL}}&\small\makecell[c]{{graph attention self-supervised learning}}\\
		\hline
		\small\makecell[c]{{MC-GASSL}}&\small\makecell[c]{{multi-cluster graph attention} \\ { self-supervised learning}}\\
		\hline
		\small\makecell[c]{{AGAT}}&\small\makecell[c]{{adaptive graph attention network}}\\
		\hline
		\small\makecell[c]{{IFSN}}&\small\makecell[c]{{inherent follow strategy network}}\\
		\hline
		\small\makecell[c]{{IFS}}&\small\makecell[c]{{inherent follow strategy} }\\
		\hline
		\small\makecell[c]{{CL}}&\small\makecell[c]{{communication link}}\\
		\hline
	\end{tabular}
\end{table}

\emph{Notations:}  $x$, $\mathbf{x}$, $\mathbf{X}$ represent a scalar $x$, a vector $\mathbf x$ and a matrix $\mathbf X$, respectively; $\sum$, $\min$, $\max$ and $\nabla$ denote the sum, minimum, maximum and vector differential operator, respectively; $(x_{ij})$ represents a matrix with element $x_{ij}$ in the $i$-th row and the $j$-th column, and $(\mathbf{X})_{ij}$ represents the element of row $i$ and column $j$ in matrix $\mathbf{X}$; $\left\|\cdot\right\|_2$ denote the 2-norm of matrices; $\cup$, $\cap$ and $\backslash$ represent the union operator, the intersection operator and the difference operator between sets; $\subseteq$ represents the set on the left is the subset of the set on the right;
$\vDash$ is the concatenation operator that concatenates the vectors on both sides; 
$|\mathcal{S}|$ represents the number of elements in set $\mathcal{S}$; 
$\mathbb{R}^N$ and $\mathbb{R}^{N\times M}$ represent the $N$-dimensional vector space and $N$-by-$M$ real matrix space, respectively; $\mathbb{N}$ and $\mathbb{N}_{+}$ represents the set of non-negative and positive integers, respectively; $\mathbf{1}_n$ represents an $n$-dimensional vector where the components are all $1$'s; $\mathbbm{1}\{\cdot\}$ represents the indicative function with range $\{0,1\}$, $\leftarrow$ denotes the assignment from right to left, while $\rightarrow$ represents the approximation of the right term by the left term; $\triangleq$ defines the symbol on the left by equation on the right. $\in$ and $\notin$ represents the element on the left belongs to and does not belong to the set on the right, respectively; in addition, $[\cdot]\triangleq\max\{\cdot,0\}$.

\section{System Model}
\label{section:system_model}
We consider a two-level USNET with $M\in\mathbb{N}_+$ clusters initially, where the $j$-th cluster is composed of one HUAV and $m_j\in\mathbb{N}_+$ FUAVs with exactly the same appearance, 
$j\in\mathcal{M}\triangleq\{1,2,...,M\}$, as shown in \reffig{fig:scene}. The total number of UAVs in the initial USNET can be calculated as $N\triangleq\sum_{j=1}^M(1+m_j)=M+\sum_{j=1}^Mm_j$. Each UAV is endowed with a fixed index $i\in\mathcal{N}\triangleq\{1,2,...,N\}$, and the index set of all HUAVs is denoted as $\mathcal{H}\subseteq\mathcal{N}$. Hereafter, we use UAV, FUAV and HUAV $i$ to represent a certain UAV, FUAV and HUAV with index $i$, respectively.
Each FUAV has a \emph{communication link} (CL) to the HUAV in the same cluster, but has no CLs to other UAVs.
Establish an $X$-$Y$-$Z$ Cartesian coordinate for the USNET, and let the position of UAV $i$
at time step $t$ be $\mathbf{p}_{i,t}=[x_{i,t},y_{i,t},z_{i,t}]^T\in\mathbb{R}^3,\; t\in\mathbb{N}$,  where $x_{i,t}$, $y_{i,t}$ and $z_{i,t}$ represent the $X$, $Y$ and $Z$ axis components, respectively. The speed of UAV $i$ at time step $t$ can then be defined as $\mathbf{v}_{i,t}\triangleq\mathbf{p}_{i,t+1}-\mathbf{p}_{i,t}$. 
\begin{figure*}[t]
	\centering
	\includegraphics[width=160mm]{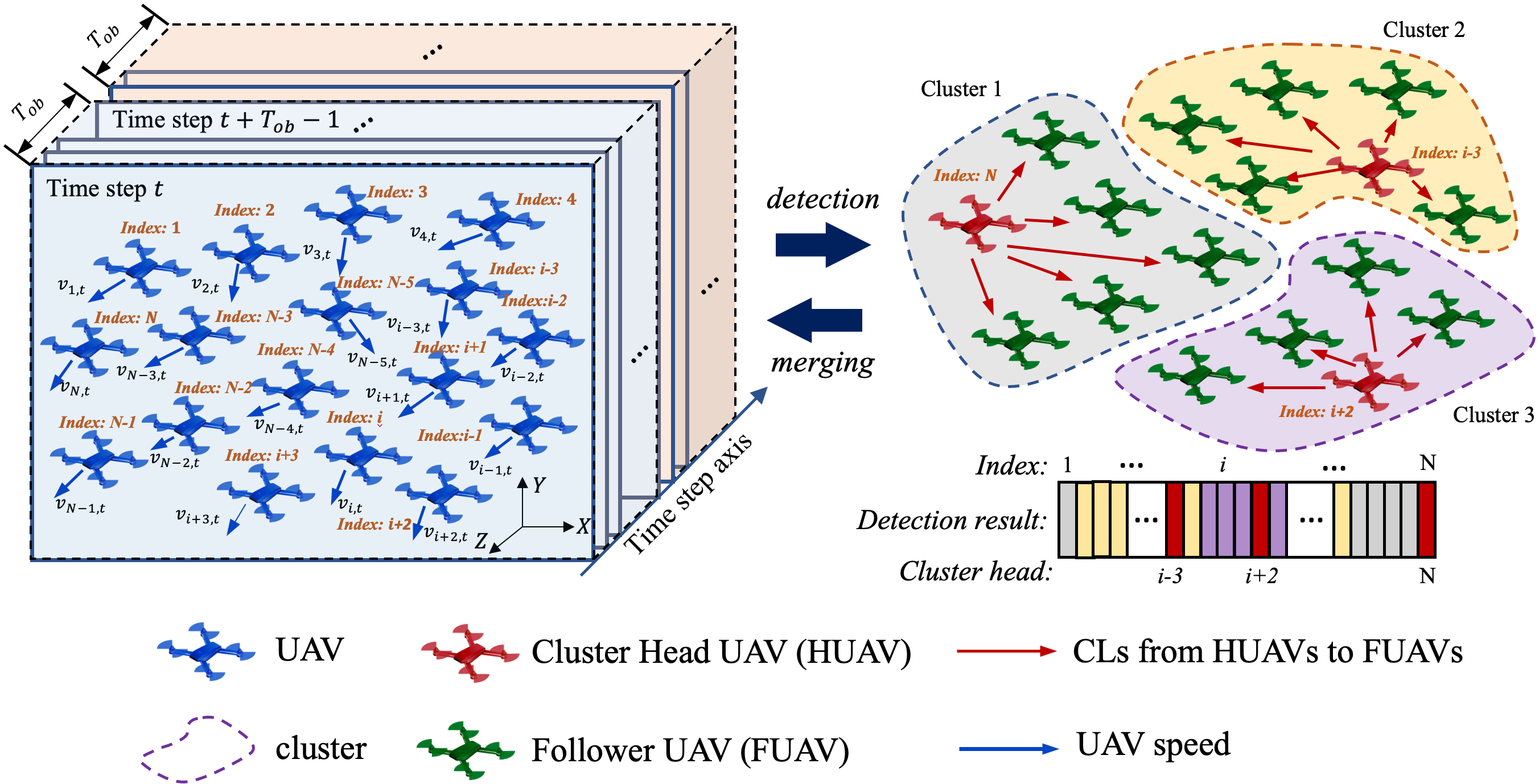}
	\caption{Cluster head detection through sequential observations of UAV behaviors.}
	\label{fig:scene}
\end{figure*}

The HUAVs determine their speeds independently, while the FUAVs follow the HUAVs within the same clusters. To be specific, each FUAV in the USNET obeys an identical IFS. Without loss of generality, we suppose the speeds of FUAVs only depend on their historical positions in the past $T_0$ time steps, as well as their HUAVs' historical speeds and positions in the past $T_0$ time steps, where $T_0\in\mathbb{N}_+$ is a constant. 
The IFS can then be represented as a function $f(\cdot)$. Specifically, let UAV $i$ and UAV $i_L$ be an FUAV and an HUAV within the same cluster, respectively. 
The speed of FUAV $i$ at time step $t+1$ can be expressed as 
\begin{align}
	\label{equ:vf}
	\mathbf{v}_{i,t+1}=f(\mathbf{P}_{i,t+1},\mathbf{V}_{i_L,t}, \mathbf{p}_{i_L,t-T_0+1}),
\end{align}
where $\mathbf{P}_{i,t+1}\triangleq [\mathbf{p}_{i,t+1},...,\mathbf{p}_{i,t-T_0+1}]^T\in\mathbb{R}^{(T_0+1)\times 3}$ and $\mathbf{V}_{i_L,t}\triangleq [\mathbf{v}_{i_L,t},...,\mathbf{v}_{i_L,t-T_0+1}]^T\in\mathbb{R}^{T_0\times 3}$. 
Note that \eqref{equ:vf} is a general form of the IFS that can encompass a wide range of functions. See Appendix \ref{appendix:A-1} for further illustrations on \eqref{equ:vf}.

Note that the specific form of the IFS $f(\cdot)$ is unknown in advance. Hence, we can only detect the HUAVs by mining the relationship between the speeds of UAVs. Specifically, we observe the positions and speeds of all UAVs in $T_{ob}\ge T_0$ consecutive time steps and destroy
the \emph{candidate HUAVs} produced by detection algorithms\footnote{The detection algorithms are proposed in Section \ref{section:GASSL} and \ref{section:MC-GASSL}.}.
Note that the candidate HUAVs may contain \emph{redundant detected HUAVs} that are actually FUAVs instead of HUAVs due to the limitations of the detection algorithms.
If there remain undetected HUAVs in the USNET, then the FUAVs in the clusters with no HUAVs will automatically merge to the nearest HUAVs. Afterwards, we will start another round of detection and the FUAVs will merge again until all the HUAVs are found out and destructed. 
We denote the total number of detection and merging rounds as $R\in\mathbb{N}_+$. Denote the index set of the candidate HUAVs in the $r$-th detection round as $\widehat{\mathcal{H}}_r$, and the index set of the remaining HUAVs after the $r$-th detection round as $\mathcal{H}_r$, where $r\in\{1,2,...,R\}$. Let $\mathcal{H}_0\triangleq\mathcal{H}$. Then we have
\begin{align}
	\label{equ:h_relation}
	\mathcal{H}_r=\mathcal{H}_{r-1}\backslash \widehat{\mathcal{H}}_r, \;\forall r,
\end{align}
and
\begin{align}
	\label{equ:h_subset}
	\mathcal{H}_0\supseteq\mathcal{H}_{1}\supseteq\cdots\supseteq\mathcal{H}_{R-1}\supset\mathcal{H}_R=\emptyset.
\end{align}
The set of successfully detected HUAVs in the $r$-th round can be expressed as $\mathcal{H}_{r-1}\backslash\mathcal{H}_r$, and the set of redundant detected HUAVs can be expressed as $\widehat{\mathcal{H}}_r\backslash\mathcal{H}_{r-1}$. The total number of successfully detected HUAVs in all rounds satisfies $\sum_{r=1}^R|\mathcal{H}_{r-1}\backslash\mathcal{H}_r|=|\mathcal{H}|$, while the total number of redundant detected HUAVs can be calculated as $\sum_{r=1}^R|\widehat{\mathcal{H}}_r\backslash\mathcal{H}_{r-1}|$. In addition, we can define the \emph{detection redundancy } as the ratio between $\sum_{r=1}^R|\widehat{\mathcal{H}}_r\backslash\mathcal{H}_{r-1}|$ and the total number of UAVs, i.e.,  $\frac{\sum_{r=1}^R|\widehat{\mathcal{H}}_r\backslash\mathcal{H}_{r-1}|}{N}$.
Denote the time step when the $r$-th round finishes as $t_r$, and let $t_0\triangleq0$. We define a function $\mathcal{R}:\mathbb{N}\rightarrow\{0,1,2,...,R\}$ as
\begin{align}
	\mathcal{R}(t)=\left\{
	\begin{aligned}
		\arg\max_{r} t\ge t_r, \quad \mbox{if} \;t<t_R;&\\
		R,\quad \quad\quad\quad \ \mbox{otherwise}.&
	\end{aligned}
	\right.
\end{align}
At time step $t$, the index set of remaining HUAVs can be expressed as $\mathcal{H}_{\mathcal{R}(t)}$, and the number of clusters is $|\mathcal{H}_{\mathcal{R}(t)}|$.
\subsection{USNET Graph}
The USNET can be viewed as a set of an undirected graphs $\mathcal{G}_t=\{\mathcal{G}_{1,t},\mathcal{G}_{2,t},...,\mathcal{G}_{|\mathcal{H}_{\mathcal{R}(t)}|,t}\}$ at each time step $t$, where $\mathcal{G}_{j,t}$ represents the $j$-th cluster. In addition, the $j$-th cluster is defined as $\mathcal{G}_{j,t}\triangleq\{\mathcal{N}_{j,t},\mathcal{E}_{j,t},\mathbf{X}_{j,t}\}$. The first term $\mathcal{N}_{j,t}$ is the index set of all UAVs in the $j$-th cluster, acting as the \emph{node set} of $\mathcal{G}_{j,t}$. The second term $\mathcal{E}_{j,t}\triangleq\{e_{ii'}|i,i'\in\mathcal{N}_{j,t},i\notin\mathcal{H}_{\mathcal{R}(t)}, i'\in\mathcal{H}_{\mathcal{R}(t)}\}$ is the \emph{edge set}, where $e_{ii'}$ represents the CL from HUAV $i'$ to FUAV $i$. The third term $\mathbf{X}_{j,t}\in\mathbb{R}^{3m_j\times (T_0+1)}$ is the \emph{time series topology matrix} that concatenates each UAV's position at time step $t-T_0+1$ as well as its speeds in the past $T_0$ time steps, i.e.,
\begin{align}
	\renewcommand{\arraystretch}{1}
	\mathbf{X}_{j,t}&\triangleq\begin{bmatrix} \mathbf{p}_{i_1,t-T_0+1} & \mathbf{v}_{i_1,t-T_0+1}  & \cdots & \mathbf{v}_{i_1,t}  \\ \mathbf{p}_{i_2,t-T_0+1} & \mathbf{v}_{i_2,t-T_0+1} & \cdots & \mathbf{v}_{i_2,t} \\
		\vdots & \vdots & \ddots & \vdots \\
		\mathbf{p}_{i_{m_j},t-T_0+1} & \mathbf{v}_{i_{m_j},t-T_0+1} & \cdots & \mathbf{v}_{i_{m_j},t}  \end{bmatrix}\\\notag
	&=\begin{bmatrix} \mathbf{p}_{i_1,t-T_0+1} & \mathbf{V}_{i_1,t}^T  \\ \mathbf{p}_{i_2,t-T_0+1} & \mathbf{V}_{i_2,t}^T  \\
		\vdots & \vdots \\
		\mathbf{p}_{i_{m_j},t-T_0+1} & \mathbf{V}_{i_{m_j},t}^T   \end{bmatrix},
\end{align} 
where $i_1,i_2,...,i_{m_j}\in\mathcal{N}_{j,t}$. Note that $\mathbf{X}_{j,t}$ also contains the position information of the $j$-th cluster in the period of time step $t-T_0+1$ to $t+1$ since the position of each UAV $i\in\mathcal{N}_{j,t}$ at time step $\tau\in\{t-T_0+2,...,t, t+1\}$ can be calculated as $\mathbf{p}_{i,\tau}=\mathbf{p}_{i,t-T_0+1}+\sum_{\tau'=t-T_0+1}^{\tau-1}\mathbf{v}_{i,\tau'}$.
\subsection{Problem Formulations}
We first study a basic cluster head detection problem, where the USNET is known to have $M=1$ cluster but with unknown HUAV index $i_L$. In this case, the USNET graph $\mathcal{G}_t$ only has one element $\mathcal{G}_{1,t}=\{\mathcal{N}_{1,t},\mathcal{E}_{1,t}, \mathbf{X}_{1,t}\}$, where $\mathcal{N}_{1,0}=\mathcal{N}$.
Note that we can surely destroy the HUAV $i_L$ within a single round if we simply let $\widehat{\mathcal{H}}_1\leftarrow\{1,2,...,N\}$. However, destroying all UAVs is source intensive, and we should try to detect the HUAV as accurately as possible. Hence, the goal is to find the HUAV $i_L$ at once while destroying as few number of UAVs as possible, which can be expressed as 
\begin{align}
	(\mathbf{P1}):\;	\max_{\widehat{\mathcal{H}}_1}\quad&J_s= \mathbbm{1}\{i_L\in\widehat{\mathcal{H}}_1\} -|\widehat{\mathcal{H}}_1| \label{optimization_single}\\
	\operatorname{s.t.}\quad&\eqref{equ:vf},\eqref{equ:h_relation}, \tag{{\ref{optimization_single}{a}}}\label{constraint_1_1_}\\
	&M=1, N=1+m_1,\tag{{\ref{optimization_single}{b}}}\label{constraint_1_2_}\\
	&f(\cdot) \text{ and } i_L \text{ are unknown}.\tag{{\ref{optimization_single}{c}}}\label{constraint_1_3_}	
\end{align}
Note that the optimal solution of $(\mathbf{P1})$ will be $\widehat{\mathcal{H}}_1=\{i_L\}$, which means we detect the precise HUAV $i_L$ at once. The corresponding maximum objective is $J_s^*=\mathbbm{1}\{i_L\in\{i_L\}\} -|\{i_L\}|=0$.

We then investigate the general cluster head detection problem of the USNET with multiple clusters, where $M$, $\mathcal{H}$ and $m_j,\forall j$ are all unknown in advance. On the one hand, we should decrease the number of detection rounds $R$ to increase the efficiency of detections. On the other hand, we should minimize the detection redundancies
to reduce the overhead of destroying. 
Hence, the goal can be expressed as 
\begin{align}
	(\mathbf{P2}):\;	\min_{\widehat{\mathcal{H}}_1,...,\widehat{\mathcal{H}}_R,R}\quad&J_m= R+\frac{1}{N}\sum_{r=1}^{R}|\widehat{\mathcal{H}}_r\backslash\mathcal{H}_{r-1}| \label{optimization_multiple}\\
	\operatorname{s.t.}\quad&\eqref{equ:vf},\eqref{equ:h_relation},\eqref{equ:h_subset} \tag{{\ref{optimization_multiple}{a}}}\label{constraint_2_1_}\\
	&N=\sum_{j=1}^M1+m_j,\tag{{\ref{optimization_multiple}{b}}}\label{constraint_2_2_}\\
	&f(\cdot), M, \mathcal{H}, m_j, \;\forall j \text{ are unknown}.\tag{{\ref{optimization_multiple}{c}}}\label{constraint_2_3_}	
\end{align}
Note that the optimal solution of $(\mathbf{P2})$ will be $R=1, \widehat{\mathcal{H}}_1=\mathcal{H}$, which means we detect all HUAVs without redundant UAVs in the first round. The corresponding minimum objective is $J_m^*=1+|{\mathcal{H}}\backslash\mathcal{H}_{0}|=1$.

It is worth noting that $(\mathbf{P1})$ and $(\mathbf{P2})$ are both unsupervised learning problems since we have no prior supervised or semi-supervised training data on the USNET.

\section{Graph Attention Self-Supervised Learning}
\label{section:GASSL}
Let us consider the cluster head detection problem $(\mathbf{P1})$. As FUAVs inherently obey the IFS $f(\cdot)$, each FUAV focuses on the HUAV $i_L$ and gives little attention to other FUAVs.
In other words, HUAV $i_L$ can be viewed as the attention center among all $1+m_1$ UAVs. 
Hence, inspired from graph attention networks (GATs) \cite{gat}, we develop a graph attention self-supervised learning (GASSL) algorithm for $(\mathbf{P1})$. The network structure and training scheme of the GASSL are stated as follows.
\begin{figure*}[t]
	\centering
	\includegraphics[width=175mm]{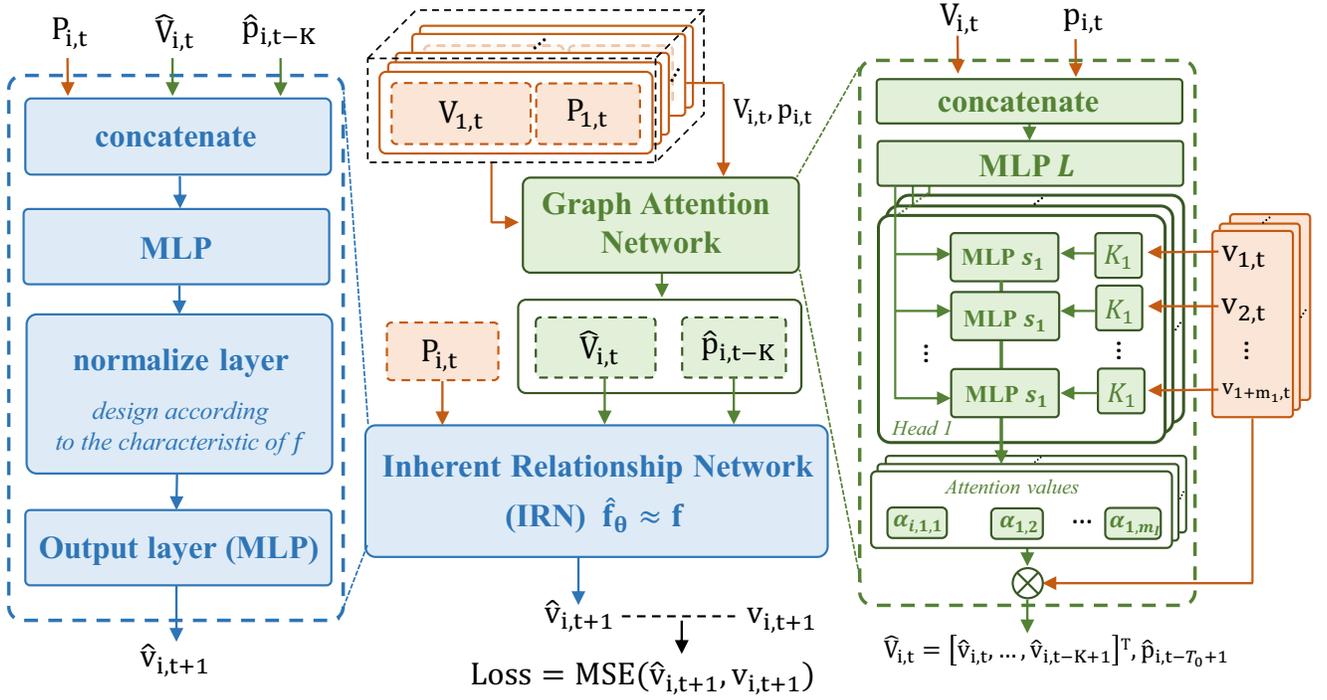}
	\caption{Network structure of graph attention self-supervised learning (GASSL). Green parts represent the modules of AGAT, blue parts are the modules of IFSN, and orange parts are the observed data on UAVs' speeds and positions.}
	\label{fig:gassl}
\end{figure*}
\subsection{Network Structure of GASSL}
The GASSL is mainly composed of two parts, including an adaptive GAT (AGAT) part and an inherent follow strategy network (IFSN) $\widehat{f}(\cdot)$ part. The goal of IFSN $\widehat{f}(\cdot)$ is to fit the IFS $f(\cdot)$ as much as possible, i.e.,
\begin{align}
	\widehat{f}(\cdot)\rightarrow f(\cdot).
\end{align}
Similar to the IFS $f(\cdot)$, the inputs of IFSN $\widehat{f}(\cdot)$ for each FUAV $i$ are $\mathbf{P}_{i,t}$, $\mathbf{V}_{i_L,t}$ and $\mathbf{p}_{i_L, t-T_0}$. However, the inputs $\mathbf{V}_{i_L,t}$ and $\mathbf{p}_{i_L, t-T_0+1}$ cannot be obtained directly since the index of HUAV $i_L$ is unknown. Hence, the AGAT is designed to produce the estimations $\widehat{\mathbf{V}}_{i,t}\triangleq[\widehat{\mathbf{v}}_{i,t-T_0+1},\widehat{\mathbf{v}}_{i,t-T_0+2},...,\widehat{\mathbf{v}}_{i,t}]^T$ and $\widehat{\mathbf{p}}_{i, t-T_0+1}$ for the history speeds $\mathbf{V}_{i_L,t}$ and positions $\mathbf{p}_{i_L, t-T_0+1}$ of HUAV $i_L$, respectively, i.e.,
\begin{align}
	\widehat{\mathbf{V}}_{i,t}\rightarrow \mathbf{V}_{i_L,t}, \quad \widehat{\mathbf{p}}_{i, t-T_0+1}\rightarrow \mathbf{p}_{i_L, t-T_0+1},
\end{align}
where $\widehat{\mathbf{v}}_{i,\tau}$ estimates the speed $\mathbf{v}_{i_L,\tau}$ of HUAV $i_L$ at time step $\tau$, $\forall \tau\in\{t,t+1,...,t-T_0+1\}$.
\reffig{fig:gassl} shows the detailed network structure of GASSL for any UAV $i$ in the cluster.
\subsubsection{Adaptive GAT}
The AGAT of UAV $i$ takes the time series topology matrix $\mathbf{X}_{1,t}$ of the cluster as the input, and calculates the \emph{attention weight} $\alpha_{i,i'}$ of UAV $i$ with respect to UAV $i'$ in the cluster, $\forall i'\in\mathcal{N}$. Specifically, the history speeds $\mathbf{V}_{i,t}$ and position $\mathbf{p}_{i,t-T_0+1}$ of UAV $i$ are concatenated and pre-processed with a trainable multi-layer perceptron (MLP) $L(\cdot)$ to obtain the \emph{query data}. Note that the MLP $L(\cdot)$ acts as the query matrix in the traditional GAT \cite{gat}.
As there are $T_0$ history speeds data and one history position data of each UAV $i'$ in $\mathbf{X}_{1,t}$, we construct $T_0+1$ \emph{attention head} modules in the AGAT. The $l$-th attention head is composed of a trainable MLP $s_l(\cdot)$ acting as the similarity function, and a trainable parameter $\mathbf{K}_l\in\mathbb{R}^{K\times 3}$, where $K\in\mathbb{N}_+$ is a hyperparameter and $l\in\{1,2,..,T_0+1\}$. 
The first $T_0$ attention heads deal with the history speeds of all UAVs, while the last attention head deals with the history positions of all UAVs. 
Specifically, the $l$-th ($l\neq T_0+1$) attention head transforms the speeds of all UAVs at time step $t-l+1$ linearly by $\mathbf{K}_l$, obtaining the \emph{key data} of the $l$-th attention head. Each 
\stripsep=0pt
\begin{strip}
	\hrule
	\begin{align}
		\label{equ:attention_i}
		\alpha_{i,i',l}=\frac{\exp\bigg (s_l\bigg(L\big(\mathbf{V}_{i,t}\updownarrow\mathbf{p}_{i,t-T_0+1}\big)\updownarrow\mathbf{K}_l\mathbf{v}_{i',t-l+1}\bigg)\bigg)}{\sum_{i''=1}^{m_1+1}\exp\bigg (s_l\bigg(L\big(\mathbf{V}_{i,t}\updownarrow\mathbf{p}_{i,t-T_0+1}\big)\updownarrow\mathbf{K}_l\mathbf{v}_{i'',t-l+1}\bigg)\bigg)}.
	\end{align}
	\begin{align}
		\label{equ:attention_i'}
		\alpha_{i,i',T_0+1}=\frac{\exp\bigg (s_{T_0+1}\bigg(L\big(\mathbf{V}_{i,t}\updownarrow\mathbf{p}_{i,t-T_0+1}\big)\updownarrow\mathbf{K}_{T_0+1}\mathbf{p}_{i',t-T_0+1}\bigg)\bigg)}{\sum_{i''=1}^{m_1+1}\exp\bigg (s_{T_0+1}\bigg(L\big(\mathbf{V}_{i,t}\updownarrow\mathbf{p}_{i,t-T_0+1}\big)\updownarrow\mathbf{K}_{T_0+1}\mathbf{v}_{i'',t-{T_0}+1}\bigg)\bigg)}.
	\end{align}
\end{strip}
key date is concatenated with query data and processed by similarity function $s_l(\cdot)$ and softmax operation, as shown in \eqref{equ:attention_i}, where $\updownarrow$ represents the concatenation operation, and $\alpha_{i,i',l}$ is the \emph{attention value} of UAV $i$ with respect to the speed $\mathbf{v}_{i',t-l+1}$ of UAV $i'$. The last attention head processes the positions of all UAVs at time step $t-T_0+1$ in the same way and outputs the attention values $\alpha_{i,i',T_0+1},\forall i'$ with respect to $\mathbf{p}_{i',t-T_0+1}$ as \eqref{equ:attention_i'}.
The attention weight $\alpha_{i,i'}$ of UAV $i$ with respect to UAV $i'$ is calculated as the average attention values in all $T_0+1$ attention heads, i.e.,
\begin{align}
	\alpha_{i,i'}=\frac{1}{T_0+1}\sum_{l=1}^{T_0+1}\alpha_{i,i',l}.
\end{align}
Denote the parameters of the AGAT for UAV $i$ as $\mathbf{\Phi}_i$.
Note that the attention weight $\alpha_{i',i}$ may not equal to $\alpha_{i,i'}$ since $\alpha_{i',i}$ is derived from the AGAT for UAV $i'$ with different parameters $\mathbf{\Phi}_{i'}$.
The \emph{value data} are directly made as the history speeds and positions themselves with no trainable transformation parameters. Then the element $\widehat{\mathbf{v}}_{i,t-l+1}$ in the output $\widehat{\mathbf{V}}_{i,t}$ is calculated as the weighted combination of the value data $\mathbf{v}_{i',t-l+1}$ with the attention values of the $l$-th attention head $\alpha_{i,i',l}$, i.e., 
\begin{align}
	\widehat{\mathbf{v}}_{i,t-l+1}=\sum_{i'=1}^{1+m_1}\alpha_{i,i',l}\mathbf{v}_{i',t-l+1}, \;\;l\neq T_0+1.
\end{align}
Similarly, the output $\widehat{\mathbf{p}}_{i, t-T_0+1}$ is derived from the weighted combination of the value data $\mathbf{p}_{i',t-T_0+1}$ with the attention values of the last attention head, i.e.,
\begin{align}
	\widehat{\mathbf{p}}_{i,t-T_0+1}=\sum_{i'=1}^{1+m_1}\alpha_{i,i',T_0+1}\mathbf{p}_{i',t-T_0+1}.
\end{align}

\subsubsection{Inherent Follow Strategy Network} The IFSN receives $\widehat{\mathbf{V}}_{i,t}$ and $\widehat{\mathbf{p}}_{i,t-T_0+1}$ from the AGAT,  and concatenates them together with the positions $\mathbf{P}_{i,t}$ of UAV $i$. The concatenations are standardized as zero-mean vectors with unit variance to make the speed and position data in the same order of magnitude.
The standardized vectors are processed with a trainable MLP to obtain the estimation $\widehat{\mathbf{v}}_{i,t+1}$ of the speed of UAV $i$ at time step $t+1$. 
Note that the parameters of IFSN $\widehat{f}(\cdot)$ are shared among all UAVs. We denote the parameters of IFSN $\widehat{f}(\cdot)$ as $\mathbf{\Gamma}$. Then all the parameters of GASSL for UAV $i$ can be denoted as $\mathbf{\Theta}_i\triangleq\{\mathbf{\Phi}_i,\mathbf{\Gamma}\}$.
The loss function $\mathcal{L}(\mathbf{\Theta}_i)$ for the GASSL for UAV $i$ is designed as the square error between the real speed $\mathbf{v}_{i,t+1}$ and the estimated speed $\widehat{\mathbf{v}}_{i,t+1}$, i.e.,
\begin{align}
	\mathcal{L}(\mathbf{\Theta}_i)&=\big(\mathbf{v}_{i,t+1}-\widehat{\mathbf{v}}_{i,t+1}\big)^2\notag\\
	&=\bigg(\mathbf{v}_{i,t+1}-\widehat{f}\big(\mathbf{P}_{i,t}, \widehat{\mathbf{V}}_{i,t},\widehat{\mathbf{p}}_{i,t-T_0+1}\big)\bigg)^2.
\end{align}
Note that $\mathcal{L}(\mathbf{\Theta}_i)\ge 0$.
As proved in Proposition \ref{proposition:1},  a sufficient condition for $\mathcal{L}(\mathbf{\Theta}_i)=0$ is that the IFSN $\widehat{f}(\cdot)$ completely fits the IFS $f(\cdot)$ and UAV $i$ only focuses on the speeds and positions of HUAV $i_L$.
\begin{proposition}
	\label{proposition:1}
	A sufficient condition for $\mathcal{L}(\mathbf{\Theta}_i)=0$ is that
	\begin{itemize}
		\item the IFSN $\widehat{f}(\cdot)$ completely fits the IFS $f(\cdot)$, i.e., $\widehat{f}(\cdot)=f(\cdot)$;
		\item UAV $i$ only focuses on the speeds and positions of HUAV $i_L$, i.e., $\alpha_{i,i_L,l}=1,\forall l$ and  $\alpha_{i,i',l}=0,\forall i'\neq i_L,\forall l$,
	\end{itemize}
	hold at the same time.
\end{proposition}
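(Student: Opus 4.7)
The proof is essentially a direct verification by substitution, so my plan is to show that under the two listed conditions, the AGAT outputs collapse exactly onto the HUAV's true data, and then the IFSN, being a perfect copy of $f(\cdot)$, reproduces the true next-step speed. Hence the squared-error loss vanishes.

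First, I would invoke condition (ii). Since the attention weights satisfy $\alpha_{i,i_L,l}=1$ and $\alpha_{i,i',l}=0$ for all $i'\neq i_L$ and all $l\in\{1,\ldots,T_0\}$, the weighted combination
\begin{align*}
\widehat{\mathbf{v}}_{i,t-l+1}=\sum_{i'=1}^{1+m_1}\alpha_{i,i',l}\,\mathbf{v}_{i',t-l+1}
\end{align*}
reduces to $\mathbf{v}_{i_L,t-l+1}$. Stacking across $l=1,\ldots,T_0$ yields $\widehat{\mathbf{V}}_{i,t}=\mathbf{V}_{i_L,t}$. The same argument applied to the $(T_0+1)$-th attention head, which operates on positions rather than speeds, gives $\widehat{\mathbf{p}}_{i,t-T_0+1}=\mathbf{p}_{i_L,t-T_0+1}$.

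Next, I would plug these identities into the IFSN: by construction of GASSL,
\begin{align*}
\widehat{\mathbf{v}}_{i,t+1}=\widehat{f}\bigl(\mathbf{P}_{i,t},\widehat{\mathbf{V}}_{i,t},\widehat{\mathbf{p}}_{i,t-T_0+1}\bigr)=\widehat{f}\bigl(\mathbf{P}_{i,t},\mathbf{V}_{i_L,t},\mathbf{p}_{i_L,t-T_0+1}\bigr).
\end{align*}
Now applying condition (i), $\widehat{f}(\cdot)=f(\cdot)$, the right-hand side becomes $f(\mathbf{P}_{i,t},\mathbf{V}_{i_L,t},\mathbf{p}_{i_L,t-T_0+1})$, which by the assumed IFS form \eqref{equ:vf} is precisely $\mathbf{v}_{i,t+1}$. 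Consequently $\mathbf{v}_{i,t+1}-\widehat{\mathbf{v}}_{i,t+1}=\mathbf{0}$, so $\mathcal{L}(\mathbf{\Theta}_i)=0$, completing the proof.

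There is no real obstacle in this argument since both conditions are tailored to make each stage of the network act as the identity on the relevant quantities; the only subtlety is a bookkeeping check that the subscript ranges in the position/speed history feeding the IFSN match those consumed by $f(\cdot)$ in \eqref{equ:vf}. I would make this bookkeeping explicit in the write-up so the reader sees that conditions (i) and (ii) together align every argument of $\widehat{f}$ with the corresponding argument of $f$, leaving nothing to cancel beyond what the two hypotheses already guarantee.
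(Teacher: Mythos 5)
Your proposal is correct and follows essentially the same route as the paper's own proof: condition (ii) collapses the attention-weighted combinations to the HUAV's true speeds and position, so $\widehat{\mathbf{V}}_{i,t}=\mathbf{V}_{i_L,t}$ and $\widehat{\mathbf{p}}_{i,t-T_0+1}=\mathbf{p}_{i_L,t-T_0+1}$, and then condition (i) together with \eqref{equ:vf} makes the IFSN output equal $\mathbf{v}_{i,t+1}$, so the squared-error loss vanishes. No gaps; the bookkeeping remark about matching argument indices is a reasonable addition but not needed beyond what the paper already does.
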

\begin{proof}
	When $\alpha_{i,i_L,l}=1,\forall l$ and  $\alpha_{i,i',l}=0,\forall i'\neq i_L,\forall l$, we have
	\begin{align}
		\widehat{\mathbf{v}}_{i,t-l+1}&=\sum_{i'=1}^{1+m_1}\alpha_{i,i',l}\mathbf{v}_{i',t-l+1}\notag\\
		&=\sum_{i'=1}^{1+m_1}\mathbbm{1}\{i'=i_L\}\mathbf{v}_{i',t-l+1}\notag\\
		&=\mathbf{v}_{i_l,t-l+1},
	\end{align}
	and 
	\begin{align}	\widehat{\mathbf{p}}_{i,t-T_0+1}&=\sum_{i'=1}^{1+m_1}\alpha_{i,i',T_0+1}\mathbf{p}_{i',t-T_0+1}\notag\\
		&=\sum_{i'=1}^{1+m_1}\mathbbm{1}\{i'=i_L\}\mathbf{p}_{i',t-T_0+1}\notag\\
		&=\mathbf{p}_{i_L,t-T_0+1}.
	\end{align}
	Hence, we have $\widehat{\mathbf{V}}_{i,t}=\mathbf{V}_{i_L,t}$ and $\widehat{\mathbf{p}}_{i,t-T_0+1}=\mathbf{p}_{i_L,t-T_0+1}$. Since $\widehat{f}(\cdot)=f(\cdot)$, we can derive
	\begin{align}
		\mathcal{L}(\mathbf{\Theta}_i)&=\bigg(\mathbf{v}_{i,t+1}-\widehat{f}\big(\mathbf{P}_{i,t}, \widehat{\mathbf{V}}_{i,t},\widehat{\mathbf{p}}_{i,t-T_0+1}\big)\bigg)^2\notag\\
		&=\bigg(\mathbf{v}_{i,t+1}-f\big(\mathbf{P}_{i,t}, {\mathbf{V}}_{i_L,t},{\mathbf{p}}_{i_L,t-T_0+1}\big)\bigg)^2\notag\\
		&=\big(\mathbf{v}_{i,t+1}-{\mathbf{v}}_{i,t+1}\big)^2\notag\\
		&=0.
	\end{align}
So far, we have completed the proof. 
\end{proof}
Therefore, we can reduce the value of the loss function $\mathcal{L}(\mathbf{\Theta}_i)$ towards zero by training the networks in GASSL, and obtain the index of HUAV $i_L$ based on the attention weights. It is worth noting that as the IFS $f(\cdot)$ may only relates to part of the HUAV's history speeds or positions, attention heads dealing with irrelevant speed or position data of HUAVs can produce attention weights with random values after training (see Appendix \ref{appendix:A-2}). Hence, we choose the index of the maximum attention weights as the index of HUAV for UAV $i$, i.e.,
\begin{align}
	\label{equ:max_i_L}
	i_{L}^{(i)}\leftarrow\arg\max_{i'} \alpha_{i,i'}.
\end{align}
Note that $i_L^{(i)}$ only represents the choice of HUAV from the perspective of UAV $i$.
Therefore, we apply the GASSL to all $N$ UAVs and obtain the indexes of HUAV from the view of all UAVs. 
Then the probability of UAV $i$ being the HUAV $i_L$ can be calculated as 
\begin{align}
	\label{equ:prob_1}
	c_{i}=\frac{1}{N}\sum_{i'\neq i}\mathbbm{1}\{i_L^{(i')}=i\}.
\end{align}
The UAVs with top-$K$ probabilities are selected as the candidate HUAVs, i.e.,
\begin{align}
	\label{equ:prob_gen}
	\widehat{\mathcal{H}}_1\leftarrow\{i_1,i_2,...,i_K\mid c_{i_1}\ge \cdots\ge c_{i_N},i_1,...,i_N\in\mathcal{N}\},
\end{align}
where $K\in\mathbb{N}_+$ is a hyper-parameter.
Particularly, when $K=1$, the candidate HUAV is the UAV with the maximum probability, i.e.,
\begin{align}
	\label{equ:prob_2}
	\widehat{\mathcal{H}}_1\leftarrow\{ \arg\max_{i\in\mathcal{N}}c_i\}.
\end{align}
We next present the detailed training scheme of the GASSL.

\subsection{Training Scheme of GASSL}
\label{section:training_scheme}
\begin{figure*}[t]
	\centering
	\includegraphics[width=125mm]{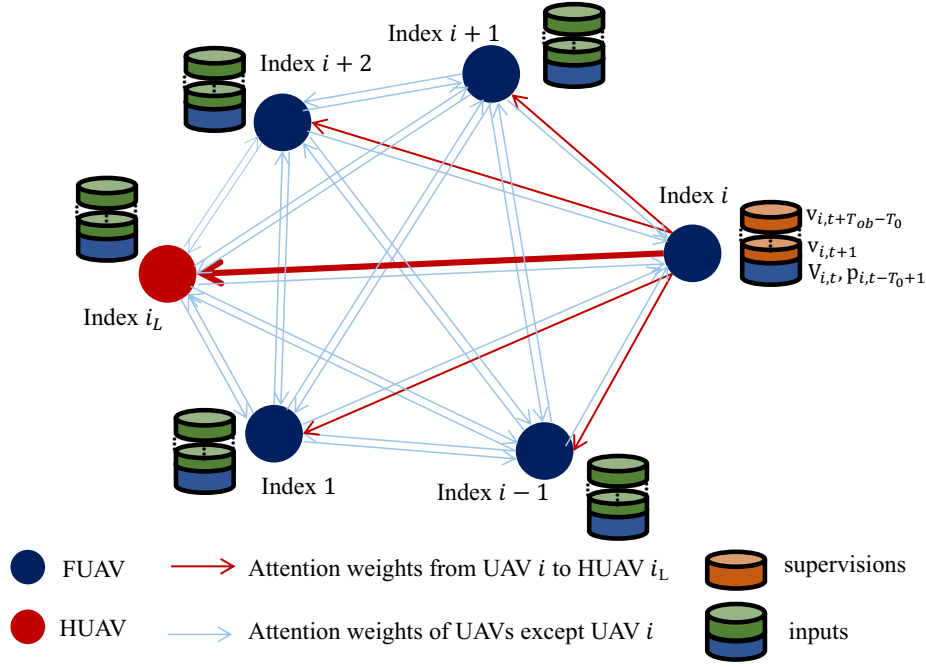}
	\caption{Learning scheme of GASSL.}
	\label{fig:ls}
\end{figure*}
As we observe the behaviors of the USNET for $T_{ob}$ time steps, we can construct a batch of $T_{ob}-T_0+1$ time series topology matrices $\mathbf{X}_{1,t}$, $\mathbf{X}_{1,t+1}$, ..., $\mathbf{X}_{1,t+T_{ob}-T_0}$. As shown in \reffig{fig:ls}, we feed the GASSL for UAV $i$ with the whole batch, where $\mathbf{v}_{i,t+1}$, $\mathbf{v}_{i,t+2}$,...,$\mathbf{v}_{i,t+T_{ob}-T_0}$ act as the supervised data. The loss function for the batch data $\mathcal{L}_\mathcal{B}(\mathbf{\Theta}_i)$ can be expressed as 
\begin{align}
	\label{equ:update_1}
	\mathcal{L}_\mathcal{B}(\mathbf{\Theta}_i)&=\sum_{b=1}^{T_{ob}-T_0}\big(\mathbf{v}_{i,t+b}-\widehat{\mathbf{v}}_{i,t+b}\big)^2\notag\\
	&=\sum_{b=1}^{T_{ob}-T_0}\bigg(\mathbf{v}_{i,t+b}-\widehat{f}\big(\mathbf{P}_{i,t+b-1}, \widehat{\mathbf{V}}_{i,t+b-1},\widehat{\mathbf{p}}_{i,t-T_0+b}\big)\bigg)^2.
\end{align} 
We update the parameters of the networks in GASSL for UAV $i$ with gradient descent method until convergence, i.e.,
\begin{align}
	\label{equ:update_2}
	\mathbf{\Phi}_i&\leftarrow	\mathbf{\Phi}_i-\beta\nabla_{	\mathbf{\Phi}_i}\mathcal{L}_\mathcal{B}(\mathbf{\Theta}_i),\\
	\label{equ:update_3}
	\mathbf{\Gamma}&\leftarrow	\mathbf{\Gamma}-\beta\nabla_{	\mathbf{\Gamma}}\mathcal{L}_\mathcal{B}(\mathbf{\Theta}_i),
\end{align}
where $\beta>0$ is the learning rate. The overall GASSL algorithm is summarized in Algorithm \ref{algorithm:1}. 

\begin{algorithm}[t]
	\normalsize\caption{GASSL Algorithm for $(\mathbf{P1})$}
	\label{algorithm:1}
	\setstretch{1} 
	{\bf Inputs:} The observations on the chronological positions and speeds of each UAV in the USNET with $M=1$ cluster and $N=1+m_1$ UAVs.\\
	{\bf Outputs:} The solutions $\widehat{\mathcal{H}}_1$ to $(\mathbf{P1})$.\\
	{\bf Initializations:} The parameter $\mathbf{\Phi}_i$ of the AGAT for UAV $i$, the parameter $\mathbf{\Gamma}$ of the IFSN $\widehat{f}(\cdot)$, the maximum number of training episodes $\Omega\in\mathbb{N}_+$, the hyper-parameter $K\in\mathbb{N}_+$.
	\begin{algorithmic}[1]
		\normalsize
		\For {$i=1$ to $N$}
		\State Construct a batch of $T_{ob}-T_0+1$ time series topology matrices $\mathbf{X}_{1,t}$, ... , $\mathbf{X}_{1,t+T_{ob}-T_0}$.
		\For {$\omega=1$ to $\Omega$} 
		\State Update the parameters of the AGAT and IFSN $\widehat{f}(\cdot)$ with \eqref{equ:update_1}, \eqref{equ:update_2} and \eqref{equ:update_3}.
		\EndFor
		\State Obtain the index of HUAV $i_L^{(i)}$ for UAV $i$  through \eqref{equ:max_i_L}.
		\EndFor
		\State Calculate the probability of each UAV being the HUAV with \eqref{equ:prob_1}, and derive $\widehat{\mathcal{H}}_1$ with \eqref{equ:prob_gen}.
	\end{algorithmic}
\end{algorithm}

\section{Multi-Cluster Graph Attention Self-Supervised Learning}
\label{section:MC-GASSL}
Let us consider the cluster head detection problem $(\mathbf{P2})$. As FUAVs in different clusters follows distinct HUAVs, directly applying GASSL to the USNET with multiple clusters may not perform well. 
Hence, we propose a multi-cluster graph attention self-supervised learning (MC-GASSL) algorithm that first clusters the USNET with metric learning method and then detects the cluster heads with GASSL. The structure of MC-GASSL is shown in \reffig{fig:mc-gassl}.
\begin{figure*}[t]
	\centering
	\includegraphics[width=150mm]{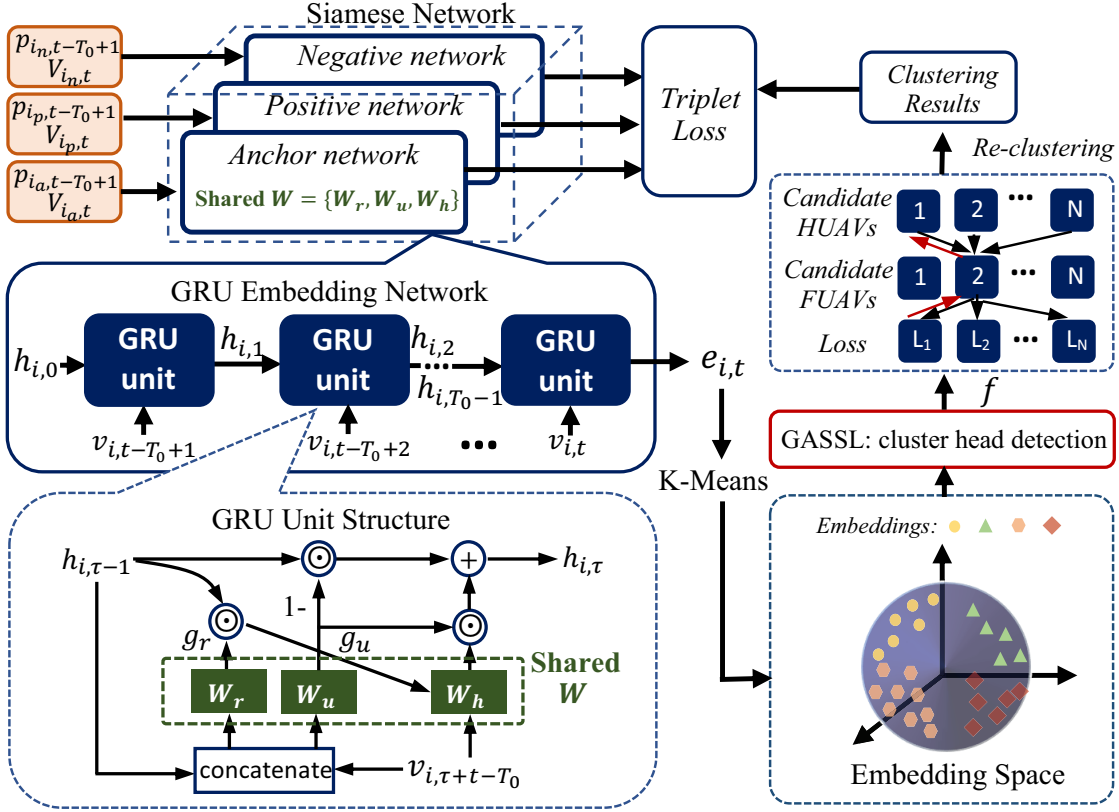}
	\caption{The structure of MC-GASSL.}
	\label{fig:mc-gassl}
\end{figure*}
\subsection{MC-GASSL Algorithm}
As the speeds and positions of UAVs are sequential data, we propose a GRU-based metric learning method to cluster the USNET before detecting HUAVs. To be specific, we design a \emph{GRU network} composed of $T_0$ identical GRU units \cite{gru} to extract the features of each UAV's positions and speeds. For UAV $i$, the GRU network receives the position $\mathbf{p}_{i,t-T_0+1}$ and the speed vectors in $\mathbf{V}_{i,t}$ sequentially, and outputs the extracted feature $\mathbf{e}_{i,t}\in\mathbb{R}^3$ of UAV $i$. Note that the extracted features of UAVs within the same cluster are expected to be close together in the feature space, while the extracted features of UAVs in distinct clusters are expected to be
separated from each other as far as possible. 
Since in this way, we can cluster the UAVs through the geometric distance between the extracted features of UAVs with traditional clustering methods, such as K-Means \cite{k-means}.

\subsubsection{Structure of the GRU Network}
The structure of the GRU network is shown in \reffig{fig:mc-gassl}.
Specifically, the $\tau$-th GRU unit takes both the output $\mathbf{h}_{i,\tau-1}\in\mathbb{R}^{3}$ of the previous GRU unit and $\mathbf{v}_{i,t-T_0+\tau}$ as input, and calculates $\mathbf{h}_{i,\tau}\in\mathbb{R}^{3}$ for the next GRU unit, where $\mathbf{h}_{i,0}\triangleq \mathbf{p}_{i,t-T_0+1}$ and $\mathbf{h}_{i,T_0}\triangleq\mathbf{e}_{i,t}$, $\forall \tau\in\{1,2,...,T_0\}$. Inside the $\tau$-th GRU, $\mathbf{h}_{i,\tau-1}$ and $\mathbf{v}_{i,t-T_0+\tau}$ are concatenated and transformed to the \emph{reset gate} $\mathbf{g}_r\in\mathbb{R}^{3}$ and \emph{update gate} $\mathbf{g}_u\in\mathbb{R}^{3}$, i.e.,
\begin{align}
	\mathbf{g}_r&=\sigma\big(\mathbf{W}_r(\mathbf{h}_{i,\tau-1}\updownarrow\mathbf{v}_{i,t-T_0+\tau})\big),\\
	\mathbf{g}_u&=\sigma\big(\mathbf{W}_u(\mathbf{h}_{i,\tau-1}\updownarrow\mathbf{v}_{i,t-T_0+\tau})\big),
\end{align}
where $\mathbf{W}_r\in\mathbb{R}^{3\times 6}$ and $\mathbf{W}_u\in\mathbb{R}^{3\times 6}$ are two trainable parameters, and $\sigma(\cdot)$ is the sigmoid function. The reset gate $\mathbf{g}_r$ resets $\mathbf{h}_{i,\tau-1}$ through Hadamard product, and the obtained reset vector is concatenated with $\mathbf{v}_{i,t-T_0+\tau}$ and transformed to vector $\mathbf{h}'_{i,\tau}\in\mathbb{R}^{3}$, i.e.,
\begin{align}
	\mathbf{h}'_{i,\tau}=\tanh \bigg(\mathbf{W}_h\big(\mathbf{v}_{i,t-T_0+\tau}\updownarrow(\mathbf{g}_r\odot\mathbf{h}_{i,\tau-1})\big)\bigg),
\end{align}
where $\mathbf{W}_h\in\mathbb{R}^{3\times 6}$ is a trainable parameter, and $\tanh(\cdot)$ is the hyperbolic tangent function. Then the output $\mathbf{h}_{i,\tau}$ of the $\tau$-th GRU unit is calculated as
\begin{align}
	\label{equ:gru}
	\mathbf{h}_{i,\tau}=(1-\mathbf{g}_u)\odot\mathbf{h}_{i,\tau-1}+\mathbf{g}_u\odot\mathbf{h}'_{i,\tau}.
\end{align}
Note that each GRU unit shares the same trainable parameters, and we denote the set of all the trainable parameters in the GRU network as $\mathbf{W}\triangleq\{\mathbf{W}_r,\mathbf{W}_u,\mathbf{W}_h\}$. 

\subsubsection{Metric Learning of the GRU Network}
To train the GRU network, we construct a \emph{Siamese network} consisting of three identical GRU networks with shared trainable parameters $\mathbf{W}$, and train the Siamese network with deep metric learning method \cite{metric_learning}. Specifically, suppose we have a dataset $\mathcal{D}_t=\{\mathcal{N}_{1,t},\mathbf{X}_{1,t},...,\mathcal{N}_{M_0,t},\mathbf{X}_{M_0,t}\}$ of the USNET with known clustering information $\mathcal{N}_{j,t}$ and known times series topology matrix $\mathbf{X}_{j,t}$, $\forall j\in\{1,2,...,M_0\}$, where $M_0\in\mathbb{N}_+$ is a constant. We arbitrarily choose an \emph{anchor} UAV $i_a\in\mathcal{N}_{j,t}$ from cluster $j$, and sample a \emph{positive} UAV $i_p$ from the same cluster with hard sampling method \cite{hard-sample}, i.e.,
\begin{align}
	\renewcommand{\arraystretch}{1}
	i_p=\arg\max_{i\in\mathcal{N}_{j,t}}\left\|\begin{bmatrix}
		\mathbf{p}_{i,t-T_0+1}\\
		\mathbf{V}_{i,t}
	\end{bmatrix}-\begin{bmatrix}
		\mathbf{p}_{i_a,t-T_0+1}\\
		\mathbf{V}_{i_a,t}
	\end{bmatrix}\right\|_2,
\end{align}
as well as a \emph{negative} UAV $i_n$ from clusters other than cluster $j$ with hard sampling method, i.e.,
\begin{align}
	\renewcommand{\arraystretch}{1}
	i_n=\arg\min_{i\notin\mathcal{N}_{j,t}}\left\|\begin{bmatrix}
		\mathbf{p}_{i,t-T_0+1}\\
		\mathbf{V}_{i,t}
	\end{bmatrix}-\begin{bmatrix}
		\mathbf{p}_{i_a,t-T_0+1}\\
		\mathbf{V}_{i_a,t}
	\end{bmatrix}\right\|_2.
\end{align}
We then input the position and speeds of anchor, positive, and negative UAVs into the three GRU networks in the Siamese network, respectively, and obtain the corresponding extracted features $\mathbf{e}_{i_a,t}$, $\mathbf{e}_{i_p,t}$ and $\mathbf{e}_{i_n,t}$.
We construct the loss function $\mathcal{L}^s(\mathbf{W})$ of the Siamese network in the form of the triplet loss function \cite{triplet-loss}, i.e.,
\begin{align}
	\label{equ:loss}
	\mathcal{L}^s(\mathbf{W})=\bigg[\left\|\mathbf{e}_{i_p,t}-\mathbf{e}_{i_a,t}\right\|_2-\left\|\mathbf{e}_{i_n,t}-\mathbf{e}_{i_a,t}\right\|_2+\gamma\bigg]_+.
\end{align}
where $\gamma>0$ is a constant. Note that $\mathcal{L}^s(\mathbf{W})\ge 0$, and  $\mathcal{L}^s(\mathbf{W})= 0$ means that the distance $\left\|\mathbf{e}_{i_a,t}-\mathbf{e}_{i_n,t}\right\|_2$ between the extracted features of anchor UAV and negative UAV is at least $\gamma$ larger than the distance $\left\|\mathbf{e}_{i_a,t}-\mathbf{e}_{i_p,t}\right\|_2$ between the extracted features of anchor UAV and positive UAV. Further explanations on the design of $\mathcal{L}^s(\mathbf{W})$ are expressed in Appendix \ref{appendix:loss_design}.
In practice, we usually sample several  anchor UAVs in various clusters at once and find the corresponding positive and negative UAVs to form a batch of training data. The loss function for the batch can be expressed as 
\begin{align}
	\mathcal{L}^s_\mathcal{B}(\mathbf{W})=\sum_{b=1}^B\bigg[\left\|\mathbf{e}_{i_p,t,b}-\mathbf{e}_{i_a,t,b}\right\|_2-\left\|\mathbf{e}_{i_n,t,b}-\mathbf{e}_{i_a,t,b}\right\|_2+\gamma\bigg]_+,
\end{align}
where $B\in\mathbb{N}_+$ is the size of the batch, and $\mathbf{e}_{i_a,t,b}$, $\mathbf{e}_{i_p,t,b}$ and $\mathbf{e}_{i_n,t,b}$ are the $b$-th anchor, positive, and negative UAV in the batch, respectively, $\forall b\in\{1,2,...,B\}$. Then the parameter $\mathbf{W}$ can be updated by gradient descent, i.e.,
\begin{align}
	\label{equ:update_w}
	\mathbf{W}\leftarrow\mathbf{W}-\beta'\nabla_{\mathbf{W}}\mathcal{L}^s_\mathcal{B}(\mathbf{W}),
\end{align}
where $\beta'>0$ is the learning rate. We iteratively sample batches from $\mathcal{D}$ and apply \eqref{equ:update_w} to the parameter $\mathbf{W}$ until convergence. Then we can obtain the nearly optimal parameters of the GRU network.
\subsubsection{Off-line Training of the GRU Network with Meta Learning}
However, we cannot directly obtain the training dataset $\mathcal{D}_t$ in practice, since the the cluster information $\mathcal{N}_{j,t},\forall j$ is unknown. Hence, the GRU network is not able to be trained on-line in the first round, and the clustering may be imprecise. To address this issue, we propose an off-line meta learning scheme for the GRU network. The meta learning scheme can find promising initial parameters for the GRU network in an off-line manner \cite{maml}. Specifically, we construct a set of $F\in\mathbb{N}_+$ IFSs $\mathcal{F}=\{\widetilde{f}_1,\widetilde{f}_2,...,\widetilde{f}_F\}$, where $\widetilde{f}_m$ is in the form of \eqref{equ:vf}, $\forall m\in\{1,2,...,F\}$.  Based on each IFS $\widetilde{f}_m$, we build a \emph{support dataset} $\mathcal{D}_{S}^m=\{\mathcal{N}_{S,1}^{m},\mathbf{X}_{S,1}^m,...,\mathcal{N}_{S,M_0}^m,\mathbf{X}_{S,M_0}^m\}$ of the USNET with $M_0$ clusters, where $\mathcal{N}_{S,j}^m$ is the index set of UAVs in cluster $j$, $\mathbf{X}_{S,j}^m$ represents the time series topology matrix of cluster $j$, and all FUAVs obey the same IFS $f_m$. 
Meanwhile, we construct a \emph{query dataset} $\mathcal{D}_{Q}^m=\{\mathcal{N}_{Q,1}^{m},\mathbf{X}_{Q,1}^m,...,\mathcal{N}_{Q,M_0}^m,\mathbf{X}_{Q,M_0}^m\}$ of the USNET with $M_0$ clusters, where $\mathcal{N}_{Q,j}^m$ is the index set of UAVs in cluster $j$, $\mathbf{X}_{Q,j}^m$ represents the time series topology matrix of cluster $j$, and all FUAVs obey the same IFS $f_m$. Hence the total dataset is composed of the support set $\mathcal{D}_S=\{\mathcal{D}_S^1,\mathcal{D}_S^2,...,\mathcal{D}_S^{F}\}$ and the query set $\mathcal{D}_Q=\{\mathcal{D}_Q^1,\mathcal{D}_Q^2,...,\mathcal{D}_Q^{F}\}$.
We conduct $M_0$ episodes of the meta learning for the Siamese network. To begin with, we randomly initialize the parameter of the Siamese network as $\mathbf{W}_0$.
In the $m$-th episode, we take $\mathcal{D}_{S}^m$ and $\mathcal{D}_Q^m$ to update the parameter of the Siamese network at the $m$-th episode $\mathbf{W}_m$. Specifically, a temporary Siamese network with parameter ${\mathbf{\Pi}}_{m}$ is endowed with $\mathbf{W}_{m-1}$, i.e., ${\mathbf{\Pi}}_{m}\leftarrow\mathbf{W}_{m-1}$. The parameter ${\mathbf{\Pi}}_{m}$ is updated in the direction of $\nabla_{{\mathbf{\Pi}}_{m}}\mathcal{L}^s_\mathcal{B}({\mathbf{\Pi}}_{m}; \mathcal{D}^m_S)$ by $\beta_{{meta}}>0$ step size, i.e.,
\begin{align}
	\label{support_set_update}
	{\mathbf{\Pi}}_{m}^\ddagger&={\mathbf{\Pi}}_{m}-\alpha_{{meta}}\nabla_{\mathbf{\Pi}_{m}}\mathcal{L}^s_\mathcal{B}({\mathbf{\Pi}}_{m}; \mathcal{D}^m_S)\notag\\
	&=\mathbf{W}_{m-1}-\alpha_{{meta}}\nabla_{\mathbf{W}_{m-1}}\bigg[\sum_{b=1}^B\bigg[\left\|\mathbf{e}_{i_p,b}^{S,m}-\mathbf{e}_{i_a,b}^{S,m}\right\|_2\notag\\
	&-\left\|\mathbf{e}_{i_n,b}^{S,m}-\mathbf{e}_{i_a,b}^{S,m}\right\|_2+\gamma\bigg]_+\bigg],
\end{align}
where $\mathbf{\Pi}_{m}^\ddagger$ is the updated parameter of the temporary Siamese network, $\mathbf{e}^{S,m}_{i_a,b}$, $\mathbf{e}^{S,m}_{i_p,b}$ and $\mathbf{e}^{S,m}_{i_n,b}$ are the extracted features of the $b$-th anchor, positive, and negative UAV sampled in $\mathcal{D}^m_S$ using hard sampling method.
The parameter of the Siamese network is updated in the direction of $\nabla_{\mathbf{\Pi}_{m}^\ddagger}\mathcal{L}^s_\mathcal{B}({\mathbf{\Pi}}_{m}^\ddagger; \mathcal{D}^m_Q)$ by $\alpha_{{meta}}$ step size, 
i.e.,
\begin{align}
	\label{query_set_update}
	\mathbf{W}_m&=\mathbf{W}_{m-1}-\alpha_{{meta}}\nabla_{\mathbf{\Pi}_{m}^\ddagger}\mathcal{L}^s_\mathcal{B}({\mathbf{\Pi}}_{m}^\ddagger; \mathcal{D}^m_Q)\notag\\
	&=\mathbf{W}_{m-1}-\alpha_{{meta}}\nabla_{\mathbf{\Pi}_{m}^\ddagger}\bigg[\sum_{b=1}^B\bigg[\left\|\mathbf{e}_{i_p,b}^{Q,m}-\mathbf{e}_{i_a,b}^{Q,m}\right\|_2\notag\\
	&-\left\|\mathbf{e}_{i_n,b}^{Q,m}-\mathbf{e}_{i_a,b}^{Q,m}\right\|_2+\gamma\bigg]_+\bigg],
\end{align}  
where $\mathbf{e}^{Q,m}_{i_a,b}$, $\mathbf{e}^{Q,m}_{i_p,b}$ and $\mathbf{e}^{Q,m}_{i_n,b}$ are the extracted features of the $b$-th anchor, positive, and negative UAV sampled in $\mathcal{D}^m_Q$ using the hard sampling method.
After $M_0$ episodes, we obtain the \emph{meta parameters} of the Siamese network $\mathbf{W}^\star\triangleq\mathbf{W}_{M_0}$ that act as the initial parameters of the GRU network.
\begin{algorithm}[t]
	\normalsize\caption{MC-GASSL Algorithm for $(\mathbf{P2})$}
	\label{algorithm:2}
	\setstretch{1} 
	{\bf Inputs:} The observations on the chronological positions and speeds of each UAV in the USNET.\\
	{\bf Outputs:} The solutions $\widehat{\mathcal{H}}_1,...,\widehat{\mathcal{H}}_R$ and $R$ to $(\mathbf{P2})$.\\
	{\bf Initializations:} The parameters of the GRU network $\mathbf{W}_0$, the support dataset $\mathcal{D}_S$ and the query dataset $\mathcal{D}_Q$, the maximum number of detection rounds $R_m\in\mathbb{N}_+$.\\
	{\bf Off-line Meta Learning:}
	\begin{algorithmic}[1]
		\normalsize
		\For {$m=1$ to $M_0$} 
		\State Sample a batch of anchor, positive, and negative UAVs by hard sampling method. Train one step on parameter $\mathbf{W}_{m-1}$ with \refeq{support_set_update}, and update $\mathbf{W}_{m-1}$ to $\mathbf{W}_m$ with \refeq{query_set_update}. 
		\EndFor
		\State Obtain the meta parameters $\mathbf{W}^\star$ of the GRU network.
		
	\end{algorithmic}
	{\bf On-line Executions:}
	\begin{algorithmic}[1]
		\normalsize 
		\For{$r=1$ to $R_m$}
		\State Observe the positions and speeds of all UAVs for $T_{ob}$ time steps, and calculate the extracted features of the positions and speeds of all UAVs with the GRU network.
		\State Estimate the number of clusters $\widehat{M}$ with gap-statistic method based on the extracted features, and cluster the UAVs with K-Means.
		\State Detect the cluster heads in each cluster with the GASSL algorithm in Algorithm \ref{algorithm:1}, and destruct all the detected HUAVs $\widehat{\mathcal{H}}_r$.
		\State Update the parameter of the GRU network to $\mathbf{W}_r^\star$ with on-line training method.
		\If{$\mathcal{H}_r=\emptyset$}
		\State Let $R\leftarrow r$. Break the loop.
		\EndIf 
		\EndFor
	\end{algorithmic}
\end{algorithm}

\subsubsection{Clustering in the Feature Space}
In the first detection round, the extracted features of UAVs are calculated through the GRU network with meta parameter $\mathbf{W}^\star$. In the following rounds, the GRU network is trained on-line with method proposed in Section \ref{section:on-line}, and the extracted features of UAVs are calculated through the GRU network with updated parameters. 
Recall that as the number of clusters $M$ is unknown, we need to estimate the number of clusters before clustering the USNET. We here utilize the gap-statistic method \cite{gap-statistic} on the extracted features in the feature space to produce the estimator $\widehat{M}$. Then the K-Means method is applied to the extracted features for clustering with number of clusters $\widehat{M}$.
\subsubsection{Cluster Head Detection with GASSL}
After the USNET is clustered, we use the GASSL algorithm to detect the HUAVs in each cluster. The candidate HUAVs set $\widehat{\mathcal{H}}_r$ is composed of all the candidate HUAVs detected in $\widehat{M}$ clusters. Destruct the UAVs in  $\widehat{\mathcal{H}}_r$ and the remaining UAVs will merge to new clusters automatically. Note that the parameters of the IFSN $\widehat{f}(\cdot)$ are shared among all the UAVs. 
\subsubsection{On-line Training of the GRU Network}
\label{section:on-line}
With the trained IFSN $\widehat{f}(\cdot)$, we can re-cluster the USNET to provide the dataset for the further on-line training of the Siamese network, as shown in \reffig{fig:mc-gassl}. Specifically, we list all UAVs as {candidate HUAVs} and endow each UAV $i$ a \emph{score} $c_i$. Each UAV finds the one in all other UAVs best suited to be its HUAV. The most suitable HUAV $i_H$ for UAV $i$ should minimize the mean square error loss between the predicted velocities $\widehat{f}(\mathbf{P}_{i,t+b-1},\mathbf{V}_{i_H,t_b-1},\mathbf{p}_{i_H,t-T_0+b})$ and the true velocities $\mathbf{v}_{i,t+b}$, i.e.,
\begin{align}
	i_H&=\arg\min_{i'\neq i}\notag\\
	& \sum_{b=1}^{T_{ob}-T_0}\bigg(\widehat{f}(\mathbf{P}_{i,t+b-1},\mathbf{V}_{i_H,t_b-1},\mathbf{p}_{i_H,t-T_0+b})-\mathbf{v}_{i,t+b}\bigg)^2.
\end{align}
The score of UAV $i'$ of being an HUAV is calculated as 
\begin{align}
	c_{i'}=\sum_{i\neq i'}\mathbbm{1}\{i_H=i'\}.
\end{align}
According to the scores, we choose top $\widehat{M}$ UAVs to be the HUAVs in the dataset for on-line training. Therefore, we can construct the on-line training dataset $\mathcal{D}_r=\{\mathcal{N}_{1,t_r},\mathbf{X}_{1,t_r},...,\mathcal{N}_{\widehat{M},t_r},\mathbf{X}_{\widehat{M},t_r}\}$ in the $r$-th detection round, where $\mathcal{N}_{j,t_r}$ contains an HUAV chosen in the $r$-th detection round and its FUAVs. The update rule for the parameter of the GRU network in the $r$-th detection round can be expressed as 
\begin{align}
	\mathbf{W}^\star_r=\mathbf{W}^\star_{r-1}-\beta'\nabla_{\mathbf{W}^\star_{r-1}}\mathcal{L}^s_{\mathcal{B}}(\mathbf{W}^\star_{r-1};\mathcal{D}_r),
\end{align}
where $\mathbf{W}^\star_{0}\triangleq\mathbf{W}^\star$.
The overall algorithm of MC-GASSL is summarized in Algorithm \ref{algorithm:2}.

\section{Simulation Results}
\label{section:simulations}
\begin{table*}[t]
	\centering
	\caption{{Type and descriptions of the IFS $f(\cdot)$}}
	\label{table:vf_type}
	\begin{tabular}{cccc}
		\hline
		\rowcolor[gray]{0.9}
		\small\textbf{{Type}}&\small\textbf{{Notations}}&\small \textbf{{Follow-up Function $f(\cdot)$}}&	\small\textbf{{Descriptions}}\\
		\hline
		\specialrule{0em}{2.5pt}{2.5pt}
		\small\makecell[c]{{$\mathbf{1}$}}&\small$f_1$&\small\makecell[c]{{$\mathbf{v}_{i,t+1}=\text{norm}(\kappa_0\mathbf{v}_{i_L,t}+\kappa_n\mathbf{n})$}}&\small\makecell[c]{{proportional to the previous}\\ {speed $\mathbf{v}_{i_L,t}$ }}\\
		\specialrule{0em}{2.5pt}{2.5pt}
		\cline{1-4}
		\specialrule{0em}{2.5pt}{2.5pt}
		\small\makecell[c]{{$\mathbf{2}$}}&\small$f_2$&\small\makecell[c]{{$
				\mathbf{v}_{i,t+1}=\text{norm}(\sum_{\tau=0}^1\kappa_\tau\mathbf{v}_{i_L,t-\tau}+\kappa_n\mathbf{n})$}}&\small\makecell[c]{{linear combination of }\\{$\mathbf{v}_{i_L,t},\mathbf{v}_{i_L,t-1}$}}\\
		\specialrule{0em}{2.5pt}{2.5pt}
		\cline{1-4}
		\specialrule{0em}{2.5pt}{2.5pt}
		\small\makecell[c]{{$\mathbf{3}$}}&\small$f_3$&\small\makecell[c]{{$
				\mathbf{v}_{i,t+1}=\text{norm}(\sum_{\tau=0}^2\kappa_\tau\mathbf{v}_{i_L,t-\tau}+\kappa_n\mathbf{n})$}}&\small\makecell[c]{{linear combination of}\\{$\mathbf{v}_{i_L,t},\mathbf{v}_{i_L,t-1},\mathbf{v}_{i_L,t-2}$}}\\
		\specialrule{0em}{2.5pt}{2.5pt}
		\cline{1-4}
		\specialrule{0em}{2.5pt}{2.5pt}
		\small\makecell[c]{{$\mathbf{4}$}}&\small$f_4$&\small\makecell[c]{{$
				\mathbf{v}_{i,t+1}=\text{norm}(\sum_{\tau=0}^1\kappa_\tau\mathbf{v}_{i_L,t-\tau}\odot\mathbf{v}_{i_L,t-\tau}+\kappa_n\mathbf{n})$}}&\small\makecell[c]{{quadratic combination of}\\{$\mathbf{v}_{i_L,t}, \mathbf{v}_{i_L,t-1}$}}\\
		\specialrule{0em}{2.5pt}{2.5pt}
		\cline{1-4}
		\specialrule{0em}{2.5pt}{2.5pt}
		\small\makecell[c]{{$\mathbf{5}$}}&\small$f_5$&\small\makecell[c]{{$
				\mathbf{v}_{i,t+1}=\text{norm}(\sum_{\tau=0}^2\kappa_\tau\mathbf{v}_{i_L,t-\tau}\odot\mathbf{v}_{i_L,t-\tau}+\kappa_n\mathbf{n})$}}&\small\makecell[c]{{quadratic combination of}\\{$\mathbf{v}_{i_L,t},\mathbf{v}_{i_L,t-1},\mathbf{v}_{i_L,t-2}$}}\\
		\specialrule{0em}{2.5pt}{2.5pt}
		\cline{1-4}
		\specialrule{0em}{2.5pt}{2.5pt}
		\small\makecell[c]{{$\mathbf{6}$}}&\small$f_6$&\small\makecell[c]{{$
				\mathbf{v}_{i,t+1}=\text{norm}\big(\kappa_0\mathbf{v}_{i_L,t}+\kappa_n\mathbf{n}+$}\\ {$\mathbbm{1}\{\left\|\mathbf{p}_{i_L,t}-\mathbf{p}_{i,t}\right\|_2>\kappa_r\}\kappa_p\frac{\mathbf{p}_{i_L,t}-\mathbf{p}_{i,t}}{\left\|\mathbf{p}_{i_L,t}-\mathbf{p}_{i,t}\right\|_2}\big)$}}&\small\makecell[c]{{following  speed $\mathbf{v}_{i_L,t}$, keep }\\{in range $\kappa_r$ with HUAV}}\\
		\specialrule{0em}{2.5pt}{2.5pt}
		\cline{1-4}
		\specialrule{0em}{2.5pt}{2.5pt}
		\small\makecell[c]{{$\mathbf{7}$}}&\small$f_7$&\small\makecell[c]{{$
				\mathbf{v}_{i,t+1}=\text{MLP}(\mathbf{P}_{i,t},\mathbf{V}_{i_L,t},\mathbf{p}_{i_L,t-3})$}}&\small\makecell[c]{{fully connected neural network}\\{with inputs $\mathbf{P}_{i,t}, \mathbf{V}_{i_L,t},\mathbf{p}_{i_L,t-3}$}}\\
		\specialrule{0em}{2.5pt}{2.5pt}
		\hline
		\hline
		\specialrule{0em}{2.5pt}{2.5pt}
		\multicolumn{4}{c}{\small\makecell[c]{{$\kappa_0=1,\kappa_1=1,\kappa_2=1,\kappa_3=1,\kappa_p=1,\kappa_n=0.05,\kappa_r=60,\text{norm}(\cdot)=\frac{\cdot}{\left\|\cdot\right\|_2}$,} \\ { $\mathbf{n}\in\mathbb{R}^3$ represents noise, and $\mathbf{n}\sim\mathcal{N}(\mathbf{0},\mathbf{I}),$}
		}}\\
		\specialrule{0em}{2.5pt}{2.5pt}
		\hline
	\end{tabular}
\end{table*}
\begin{table*}[t]
	\centering
	\caption{{The detection rate ($\%$) of $(\mathbf{P1})$ with GASSL.}}
	\label{table:gassl}
	\begin{tabular}{c|c|c|c|c|c|c|c|c|c|c|c}
		\specialrule{0em}{1pt}{1pt}
		\toprule[2pt]
		\hline
		\multirow{2}{*}{\small\textbf{{Type}}}&\multirow{2}{*}{\small\makecell[c]{{\textbf{Follow-up}}\\{ \textbf{Function $f(\cdot)$}}}}&\multicolumn{10}{c}{\small\textbf{Number of FUAVs $m_1$}}\\
		\cline{3-12}
		&&\small\makecell[c]{{$\mathbf{2}$}}&\small\makecell[c]{{$\mathbf{3}$}}&\small\makecell[c]{{$\mathbf{5}$}}&\small\makecell[c]{{$\mathbf{10}$}}&\small\makecell[c]{{$\mathbf{15}$}}&\small\makecell[c]{{$\mathbf{20}$}}&\small\makecell[c]{{$\mathbf{25}$}}&\small\makecell[c]{{$\mathbf{30}$}}&\small\makecell[c]{{$\mathbf{40}$}}&\small\makecell[c]{{$\mathbf{50}$}}\\
		\hline
		\small$\mathbf{1}$&$f_1$&\small\makecell[c]{{$98.1\%$}}&\small$100.0\%$&\small$100.0\%$&\small$100.0\%$&\small$99.8\%$&\small$99.7\%$&\small$100.0\%$&\small$100.0\%$&\small$98.9\%$&\small$99.9\%$\\
		\hline
		\small$\mathbf{2}$&$f_2$&\small$99.8\%$&\small$100.0\%$&\small$100.0\%$&\small$100.0\%$&\small$100.0\%$&\small$98.3\%$&\small$99.2\%$&\small$99.1\%$&\small$100.0\%$&\small$100.0\%$\\
		\hline
		\small$\mathbf{3}$&$f_3$&\small$99.9\%$&\small$100.0\%$&\small$100.0\%$&\small$100.0\%$&\small$100.0\%$&\small$100.0\%$&\small$100.0\%$&\small$99.9\%$&\small$98.9\%$&\small$100.0\%$\\
		\hline
		\small$\mathbf{4}$&$f_4$&\small$92.0\%$&\small$93.4\%$&\small$99.2\%$&\small$91.4\%$&\small$98.3\%$&\small$98.5\%$&\small$99.1\%$&\small$99.5\%$&\small$94.2\%$&\small$97.5\%$\\
		\hline
		\small$\mathbf{5}$&$f_5$&\small$98.4\%$&\small$91.4\%$&\small$98.5\%$&\small$99.1\%$&\small$99.3\%$&\small$95.4\%$&\small$98.2\%$&\small$100.0\%$&\small$98.1\%$&\small$99.3\%$\\
		\hline
		\small$\mathbf{6}$&$f_6$&\small$98.2\%$&\small$100.0\%$&\small$99.9\%$&\small$98.5\%$&\small$99.1\%$&\small$99.2\%$&\small$99.9\%$&\small$100.0\%$&\small$98.9\%$&\small$99.5\%$\\
		\hline
		\small$\mathbf{7}$&$f_7$&\small$99.9\%$&\small$98.7\%$&\small$100.0\%$&\small$99.8\%$&\small$100.0\%$&\small$99.2\%$&\small$99.4\%$&\small$98.3\%$&\small$100.0\%$&\small$99.2\%$\\
		\hline
		\midrule[1pt]
		\specialrule{0em}{2pt}{2.2pt}
		\multicolumn{12}{c}{\small\makecell[c]{{The detection rate is calculated as $1+\overline{J_s}$, since we let $K=1$,}\\{The IFS $f(\cdot)$ is unknown when detecting.}}}\\
		\specialrule{0em}{1.5pt}{1pt}
		\hline
		\bottomrule[2pt]
	\end{tabular}
\end{table*}
\begin{figure*}
	\setlength{\abovecaptionskip}{0.5cm}
	\centering
	\subfigure[Attention weights]{
		\label{fig:simulations:1-1} 
		\includegraphics[width=80mm]{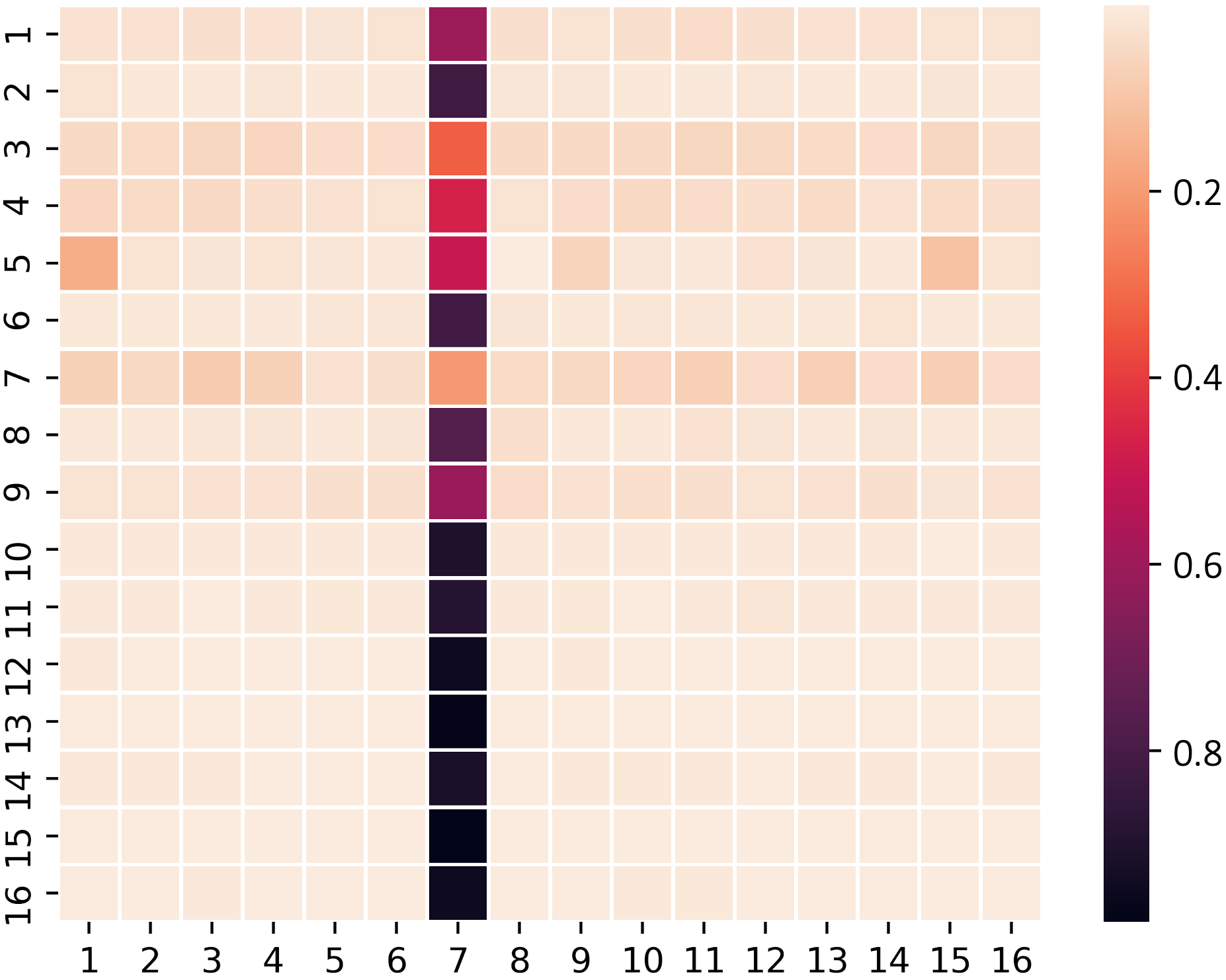}}
	\subfigure[Attention values of head 1.]{
		\label{fig:simulations:1-2} 
		\includegraphics[width=80mm]{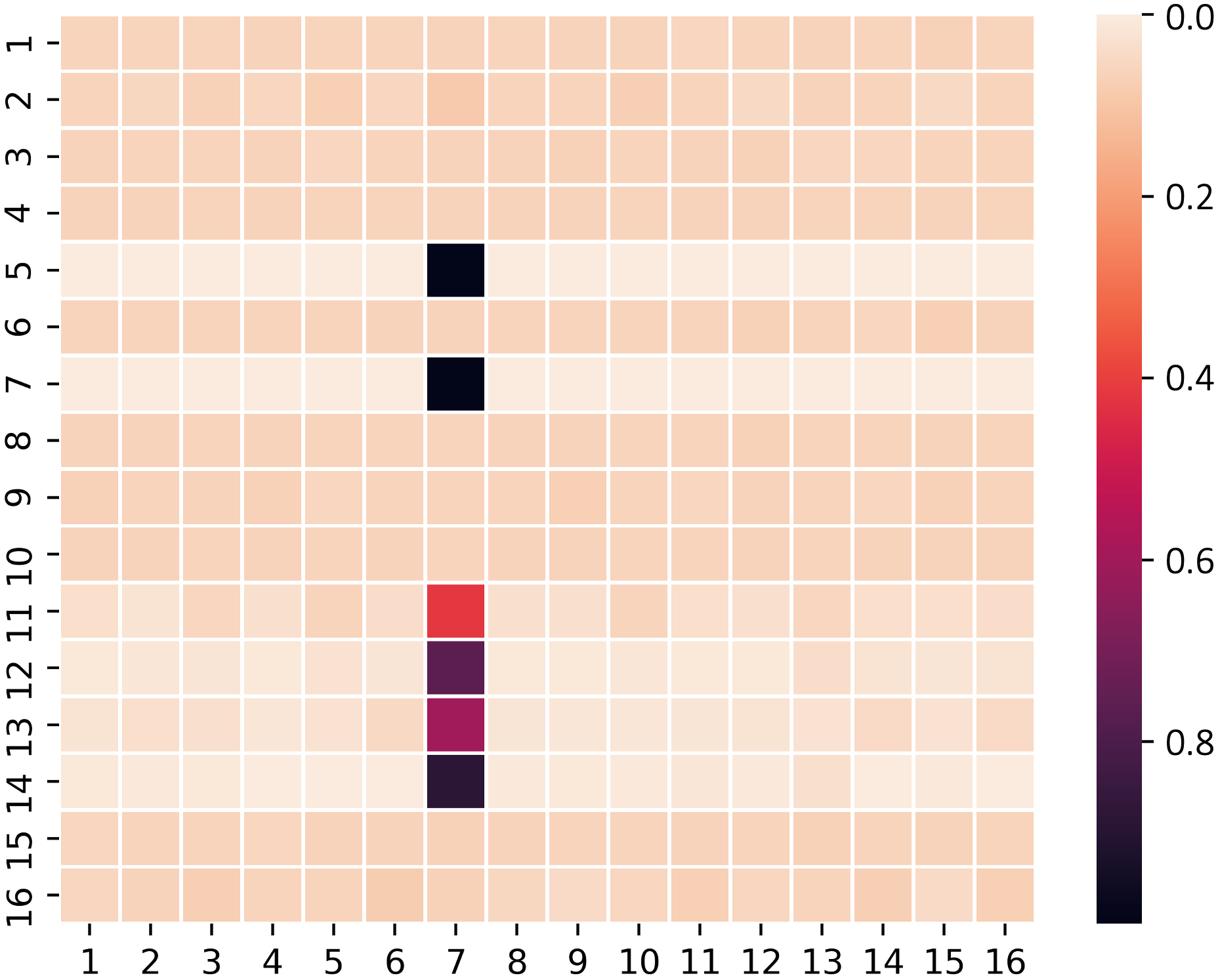}}
	
	\subfigure[Attention values of head 2.]{
		\label{fig:simulations:1-3} 
		\includegraphics[width=80mm]{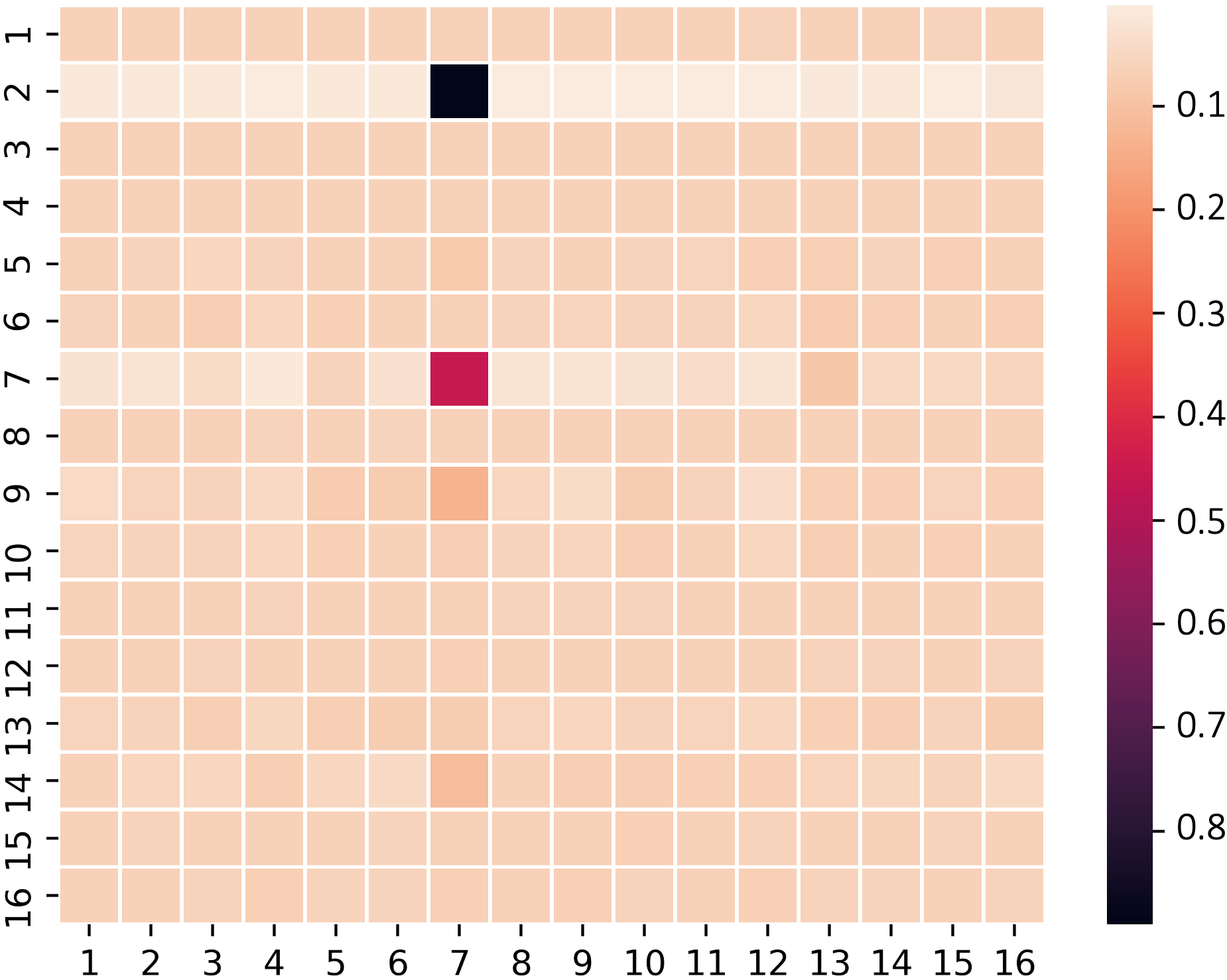}}
	\subfigure[Attention values of head 3.]{
		\label{fig:simulations:1-4} 
		\includegraphics[width=80mm]{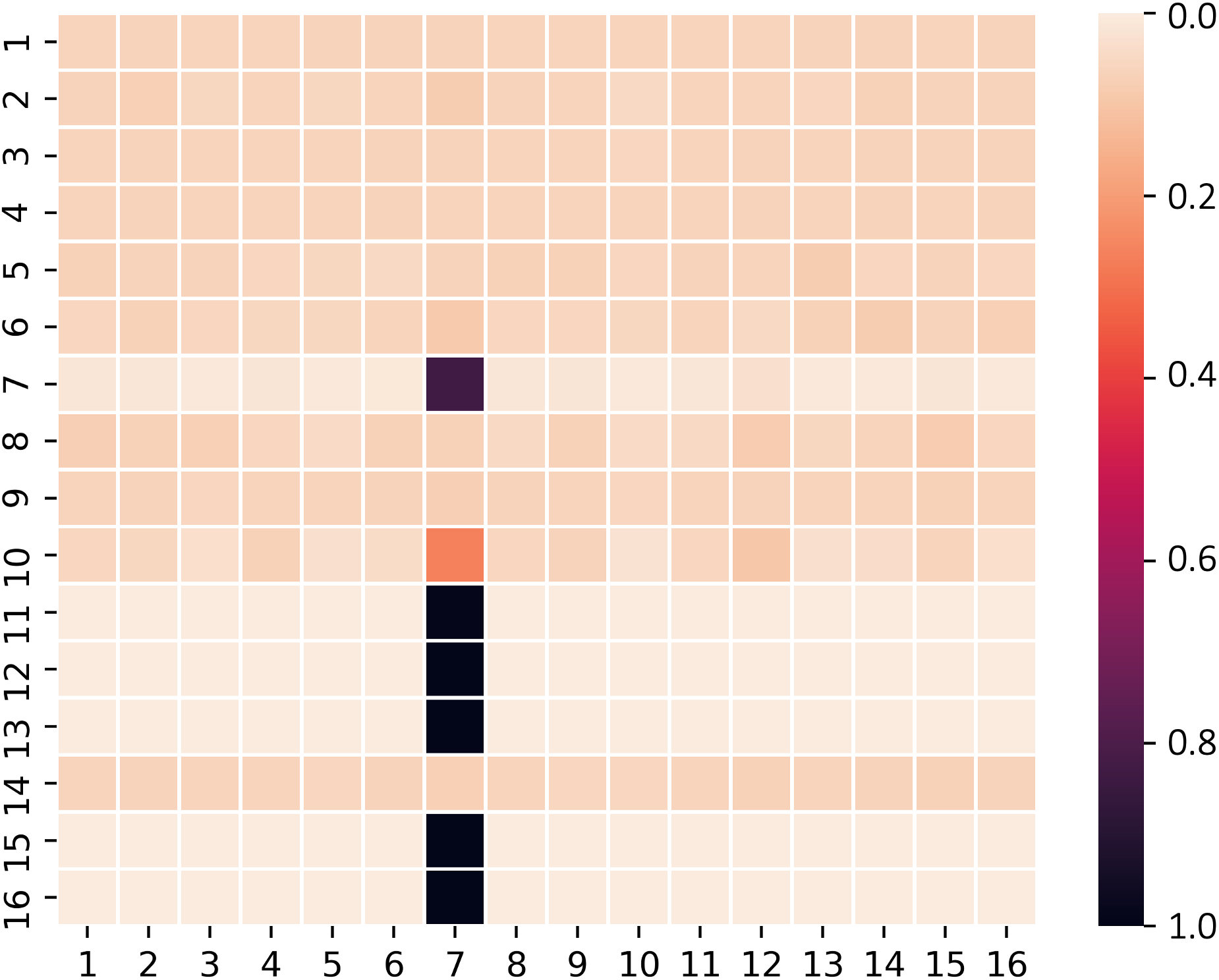}}
	
	\subfigure[Attention values of head 4.]{
		\label{fig:simulations:1-5} 
		\includegraphics[width=80mm]{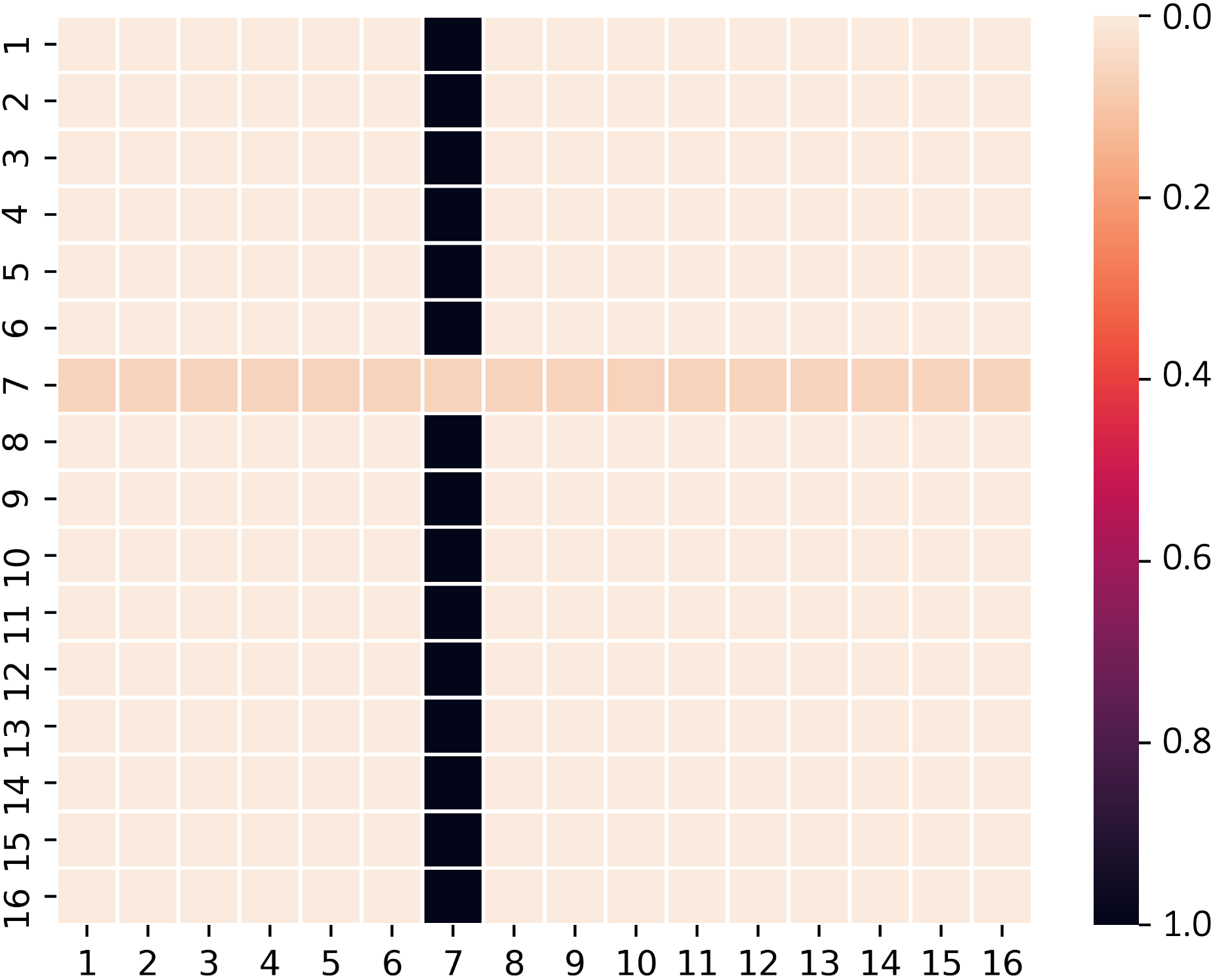}}
	\subfigure[Attention values of head 5.]{
		\label{fig:simulations:1-6} 
		\includegraphics[width=80mm]{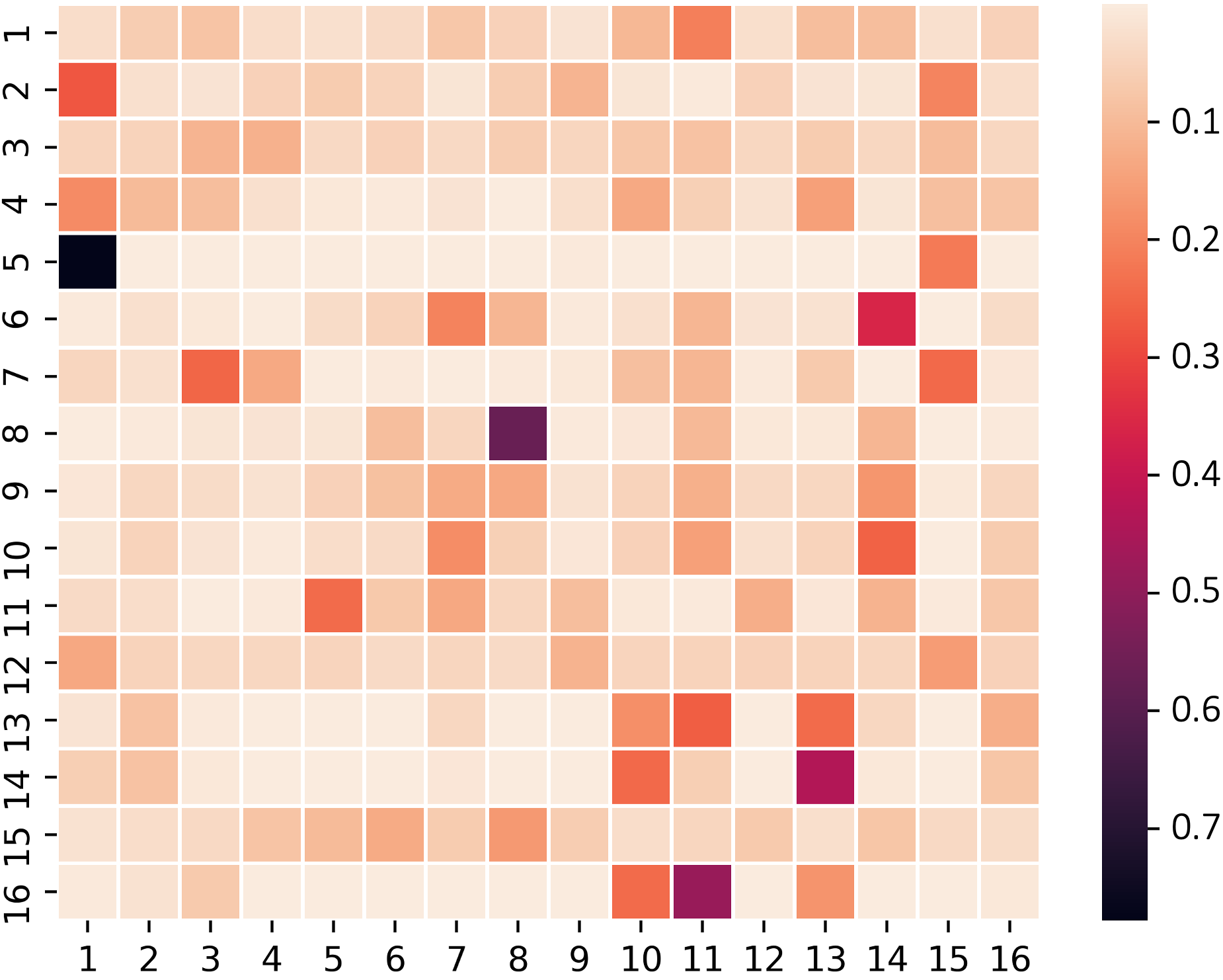}}
	
	\caption{Heat map of attentions weights and attention values of each attention head. The real index of HUAV is $i_L=7$, while the detected HUAV $\widehat{\mathcal{H}}_1=\{7\}$.}
	\label{fig:simulation:attention-1}
\end{figure*}
\begin{figure*}[t]
	\centering
	\subfigure[Attention weights between UAVs]{
		\label{fig:simulations:2-1} 
		\includegraphics[width=85mm]{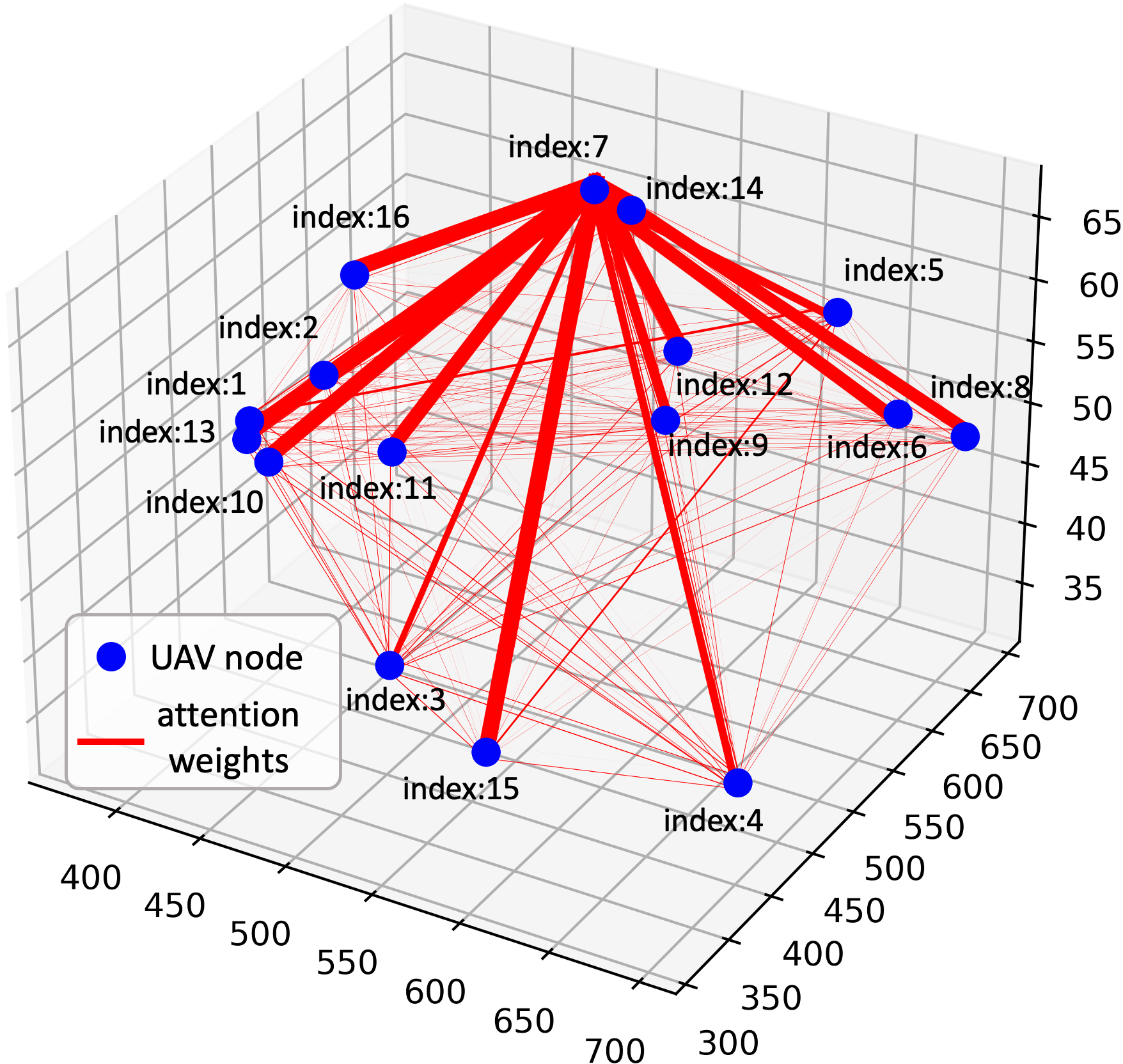}}
	\subfigure[Hierarchical graph structure of the USNET.]{
		\label{fig:simulations:2-2} 
		\includegraphics[width=85mm]{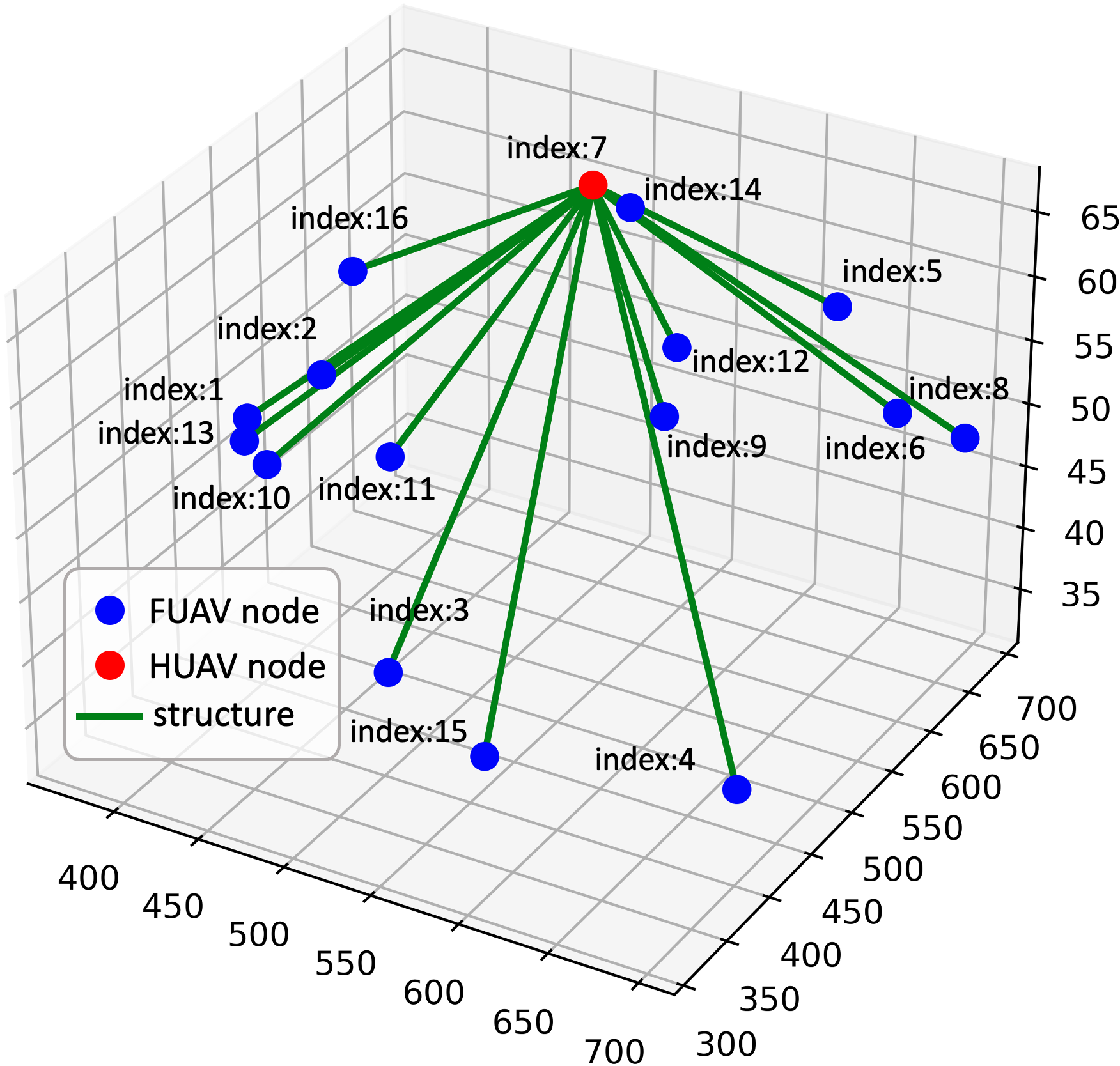}}
	\caption{Attention weights between UAVs and the derived hierarchy graph structure of the USNET.}
	\label{fig:simulation:2}
\end{figure*}
\begin{figure*}[h]
	\centering
	\subfigure[Attention weights between UAVs]{
		\label{fig:meta-loss} 
		\includegraphics[width=85mm]{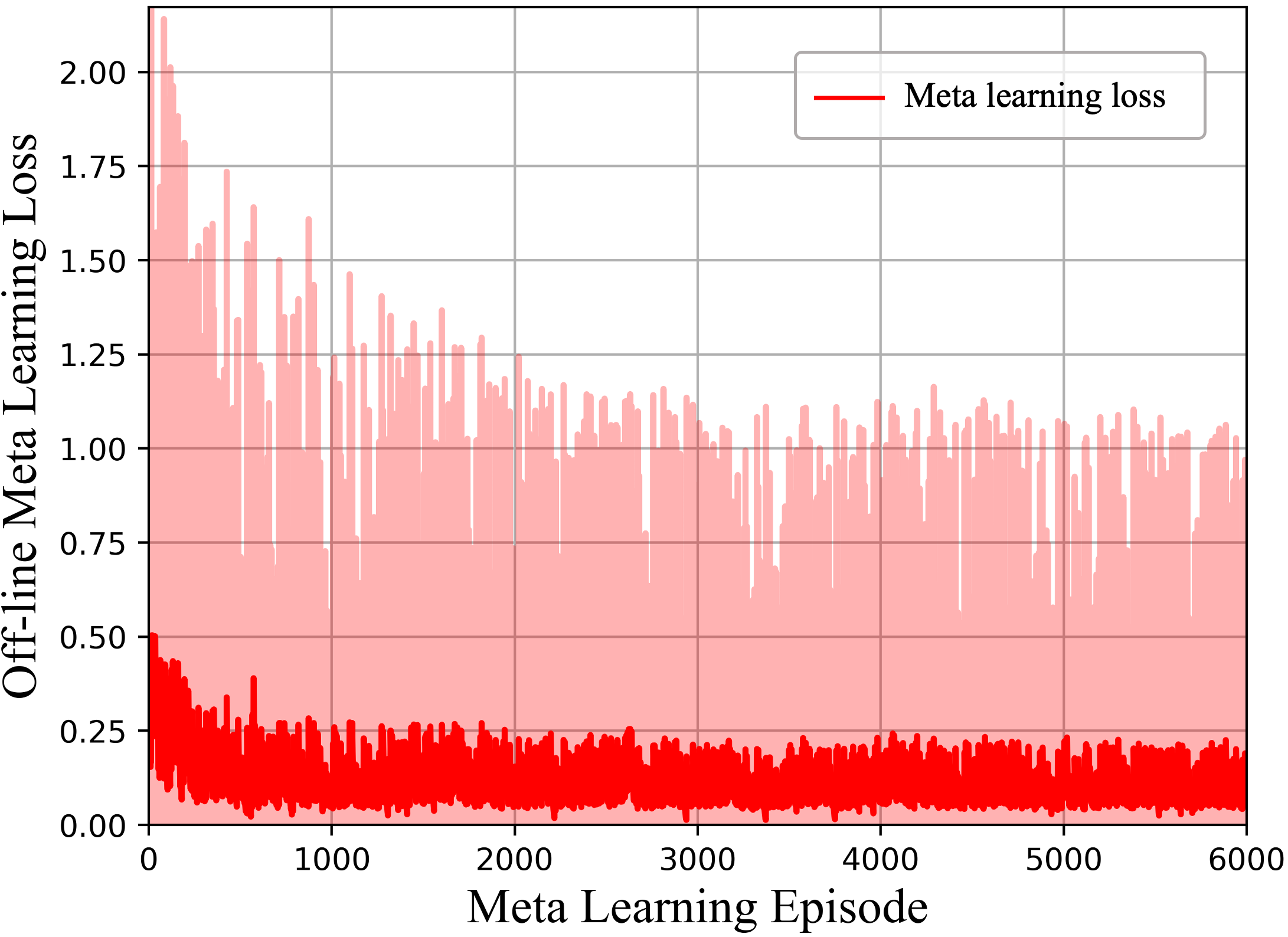}}
	\subfigure[Hierarchical graph structure of the USNET.]{
		\label{fig:meta-compare-loss} 
		\includegraphics[width=85mm]{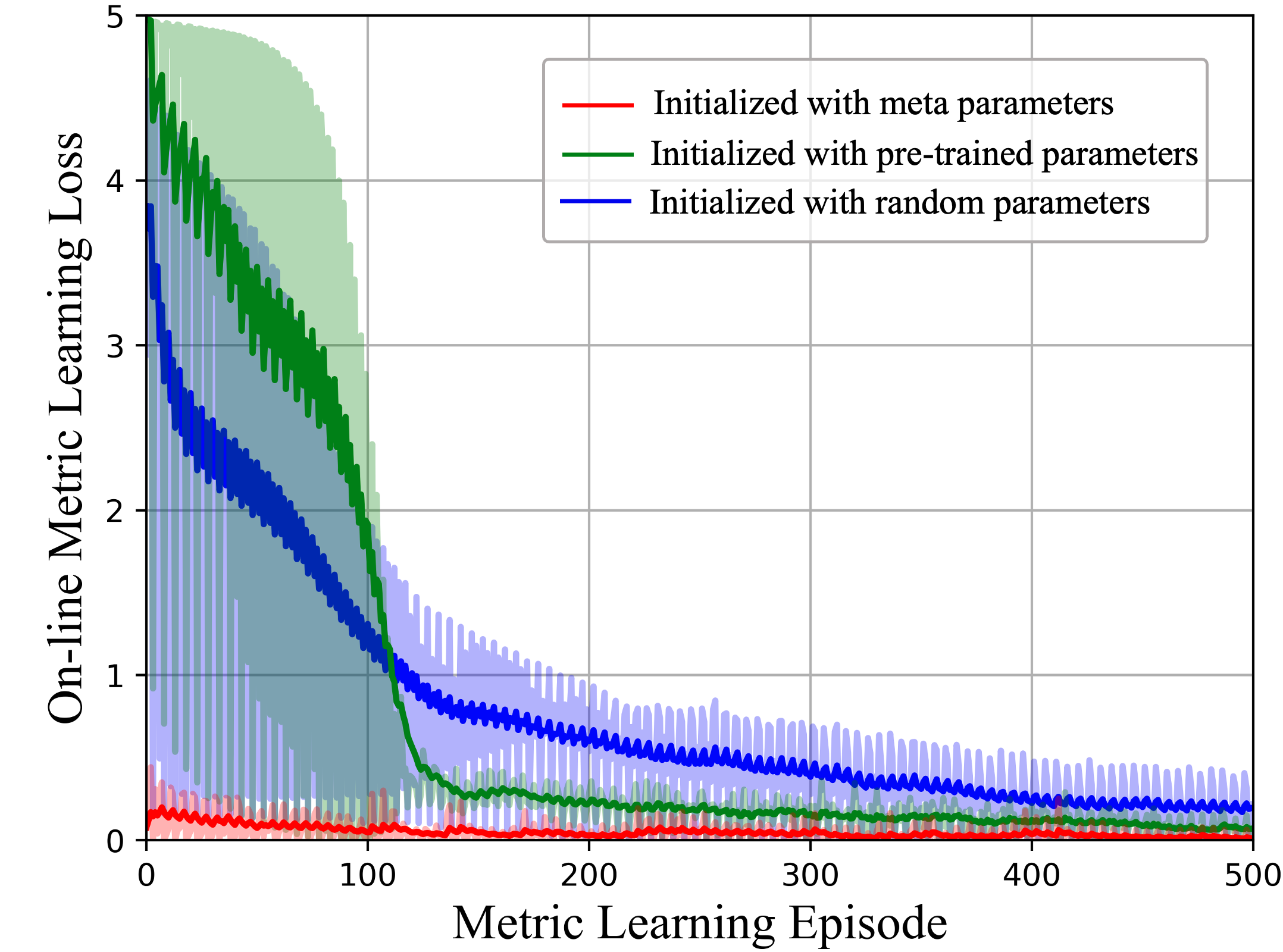}}
	\caption{Attention weights between UAVs and the derived hierarchy graph structure of the USNET.}
	\label{fig:meta}
\end{figure*}
\begin{figure}
	\centering
	\includegraphics[width=90mm]{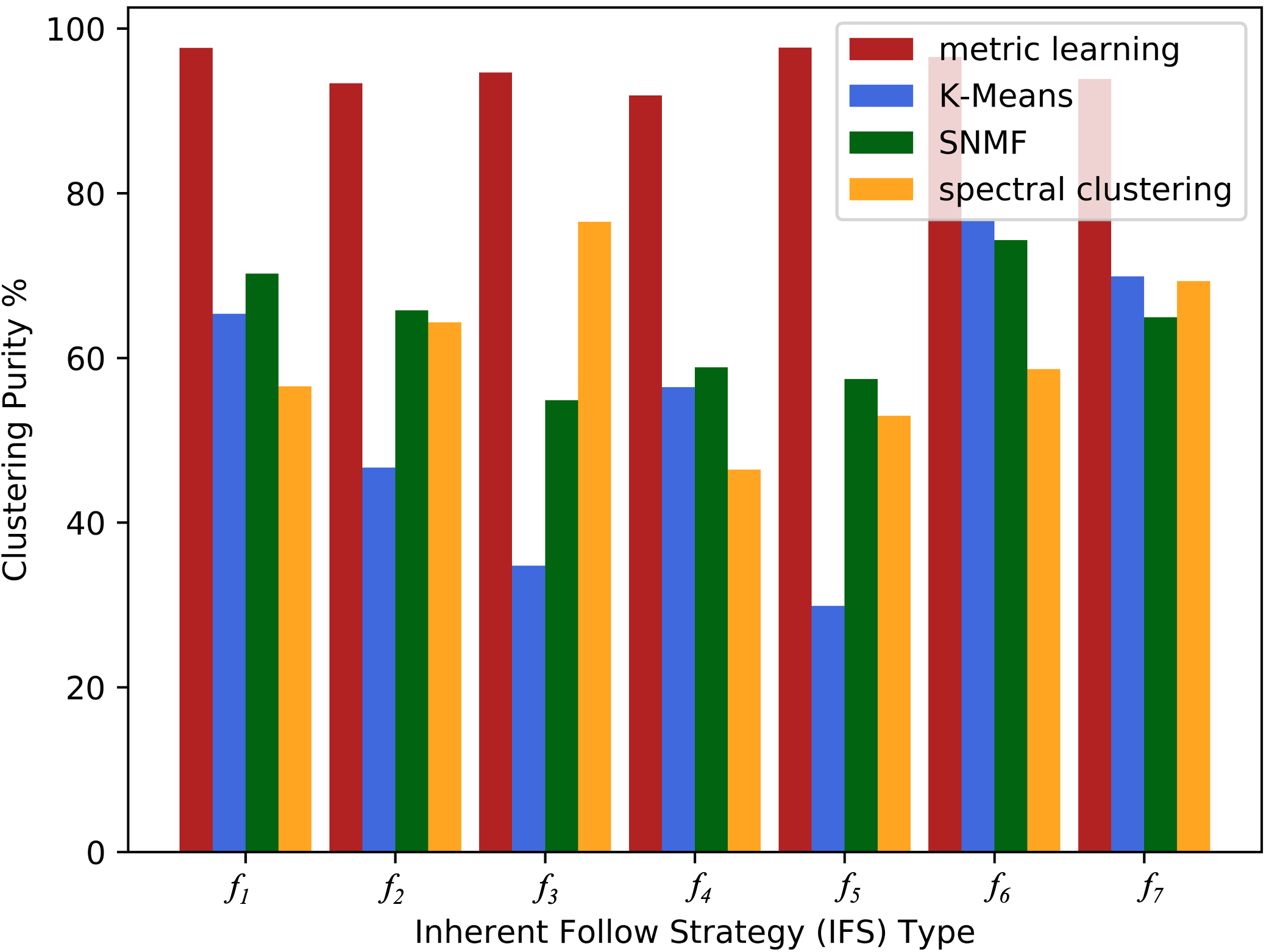}
	\caption{The clustering purity with different algorithms.}
	\label{fig:purity-compare}
\end{figure}

In the simulation, the UAVs in the USNET are initially distributed in a 1,000m$\times$1,000m$\times$100m three-dimensional space. The number of FUAVs in distinct clusters may be different, but not less than $2$, i.e., $m_j\ge 2,\forall j\in\mathcal{M}$. The speeds of distinct HUAVs are generated independently and randomly, while FUAVs in different clusters obey the same IFS $f(\cdot)$. Let $T_0= 4$, and let the observation time period be $T_{ob}=100$. In addition, the query matrix $L(\cdot)$ and similarity function $s_l(\cdot),\forall l\in\{1,2,...,T_0+1\}$ in the AGAT are MLPs of one and two hidden layers, respectively. The MLP in the IFSN $\widehat{f}(\cdot)$ is composed of three hidden layers, and there is no constraint layer in the IFSN $\widehat{f}(\cdot)$.
As shown in Table \ref{table:vf_type}, we implement $7$ different types of the IFS $f(\cdot)$, including functions of the linear and quadratic combinations of the HUAV's history speeds $f_1$ to $f_5$, functions of positions $f_6$, and even neural networks $f_7$.

\subsection{Simulation Results of $(\mathbf{P1})$ with GASSL}
Endow the hyper-parameter $K$ in the GASSL with 1, i.e., $K\leftarrow 1$. We simulate $(\mathbf{P1})$ with all seven types of the IFS $f(\cdot)$ under different number $m_1$ of FUAVs 1,000 times each.
The detection rate of the GASSL are shown in Table \ref{table:gassl}. 
We can see that the average object function $\overline{J_s}$ to $(\mathbf{P1})$ achieves the optimal value $J_s^*=0$ in many cases, indicating a detection rate\footnote{We define the \emph{detection rate} as the ratio between the times detecting the HUAV $i_L$ successfully and the total simulation times $10$. Then the detection rate can be calculated as $\overline{J_s}+1$. For example, $\overline{J_s}=J_s^*=0$ means a $100\%$ detection rate. } of $100\%$. The average object function $\overline{J_s}$ has value of $-0.018$, indicating an average detection rate of $98.2\%$. Hence, $(\mathbf{P1})$ can be efficiently solved with the GASSL.

Take the case where $f=f_1$ and $m_1=15$ as an example\footnote{More examples of the attention weights in different cases are represented in Appendix \ref{appendix:heat-map}.}. Note that the real index of HUAV is $i_L=7$. \reffig{fig:simulation:attention-1} shows the heat map of the attentions between UAVs when using the GASSL, where the $i$-th row in \reffig{fig:simulations:1-1} displays the attention weights of UAV $i$ with respect to all UAVs, and the $i$-th row in \reffig{fig:simulations:1-2} to \reffig{fig:simulations:1-6} displays the attention values of UAV $i$ to all UAVs in the attention head $1$ to $5$, respectively. We can see that UAVs are paying the most attention to UAV $7$ in  all the attention heads except head $5$. From the attention weights, we can determine the detected HUAV as $\widehat{\mathcal{H}}_1=\{7\}$, which is consistent with the real index of HUAV $i_L=7$. Fig.~6 indicates the attention relationship between UAVs and the derived hierarchy structure of the USNET. Specifically, in \reffig{fig:simulations:2-1}, the thickness of the line reflects the magnitude of the attention weight between two UAVs, and UAV $7$ has the most great attentions from all UAVs in the USNET.
\reffig{fig:simulations:2-2} shows the hierarchy structure of the USNET, where UAV $7$ acts as the HUAV. 
\begin{figure*}[t]
	\centering
	\subfigure[Initial feature space]{
		\label{fig:feature-space:1} 
		\includegraphics[width=85mm]{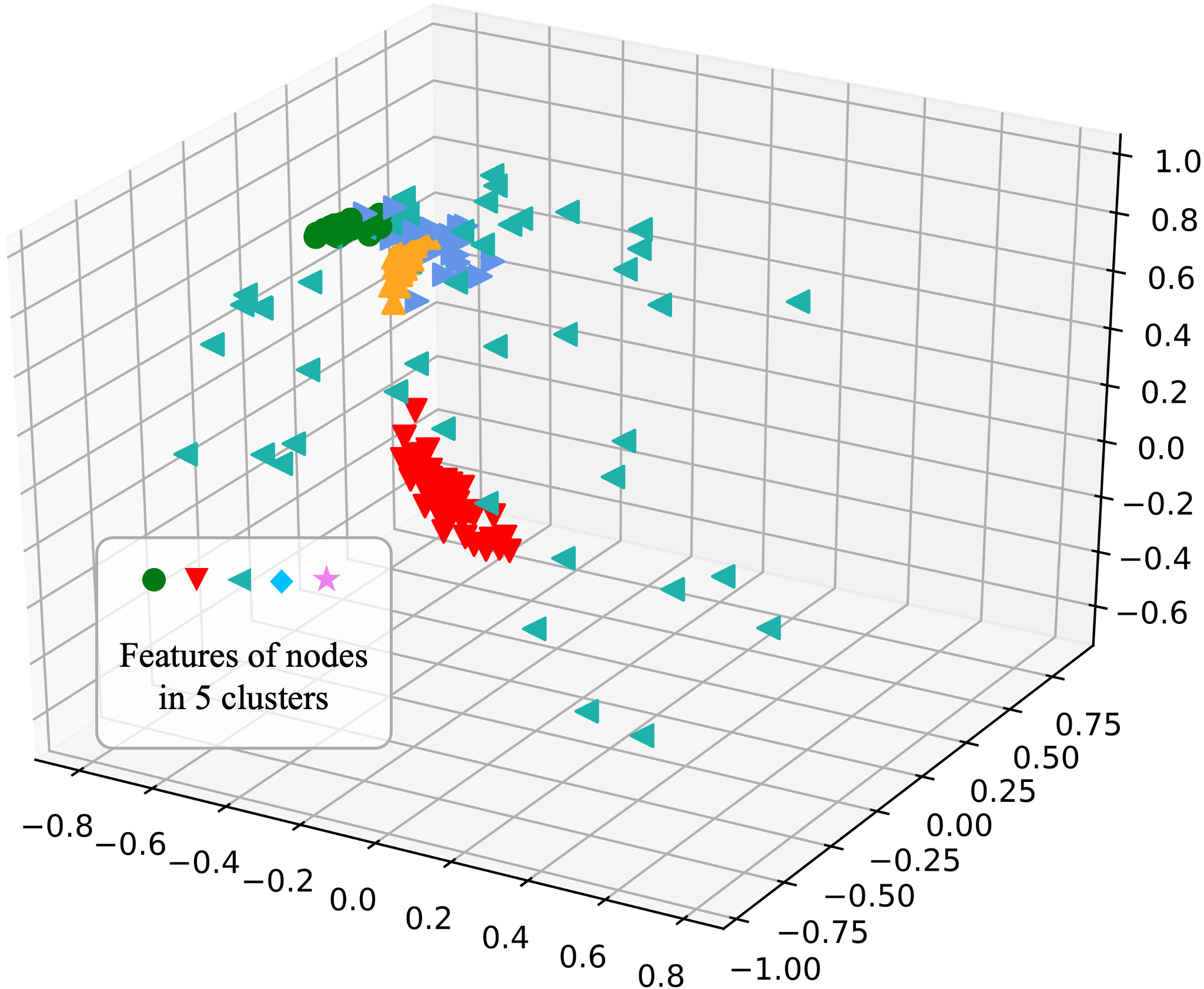}}
	\subfigure[Feature space after $500$ metric learning episodes.]{
		\label{fig:feature-space:2} 
		\includegraphics[width=85mm]{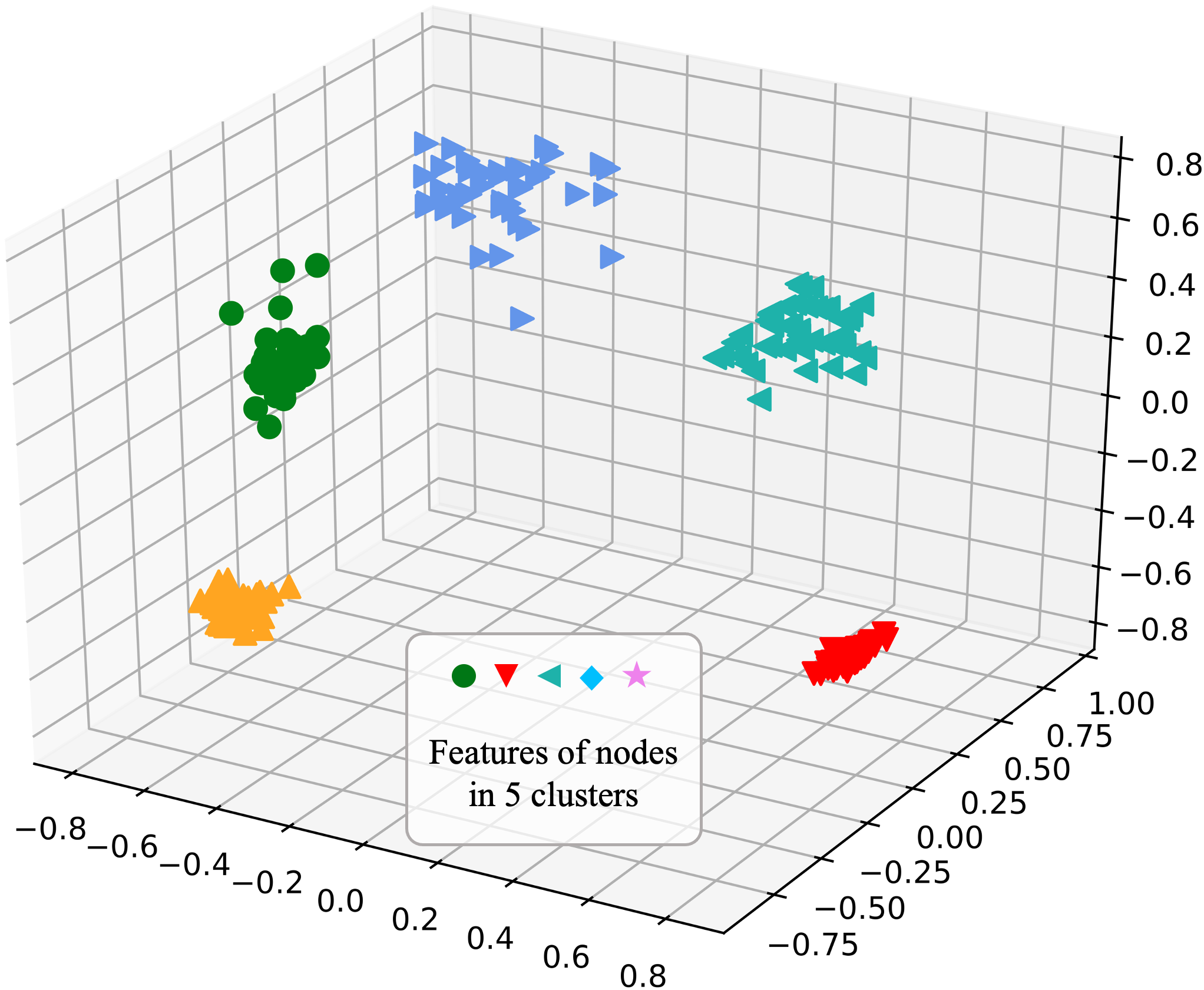}}
	\caption{The UAVs' feature changing in the metric learning process.}
	\label{fig:feature-space}
\end{figure*}
\begin{figure*}[t]
	\centering
	\subfigure[The USNET has $M=5$ UAV clusters, and $N=82$ UAVs.]{
		\label{fig:1} 
		\includegraphics[width=85mm]{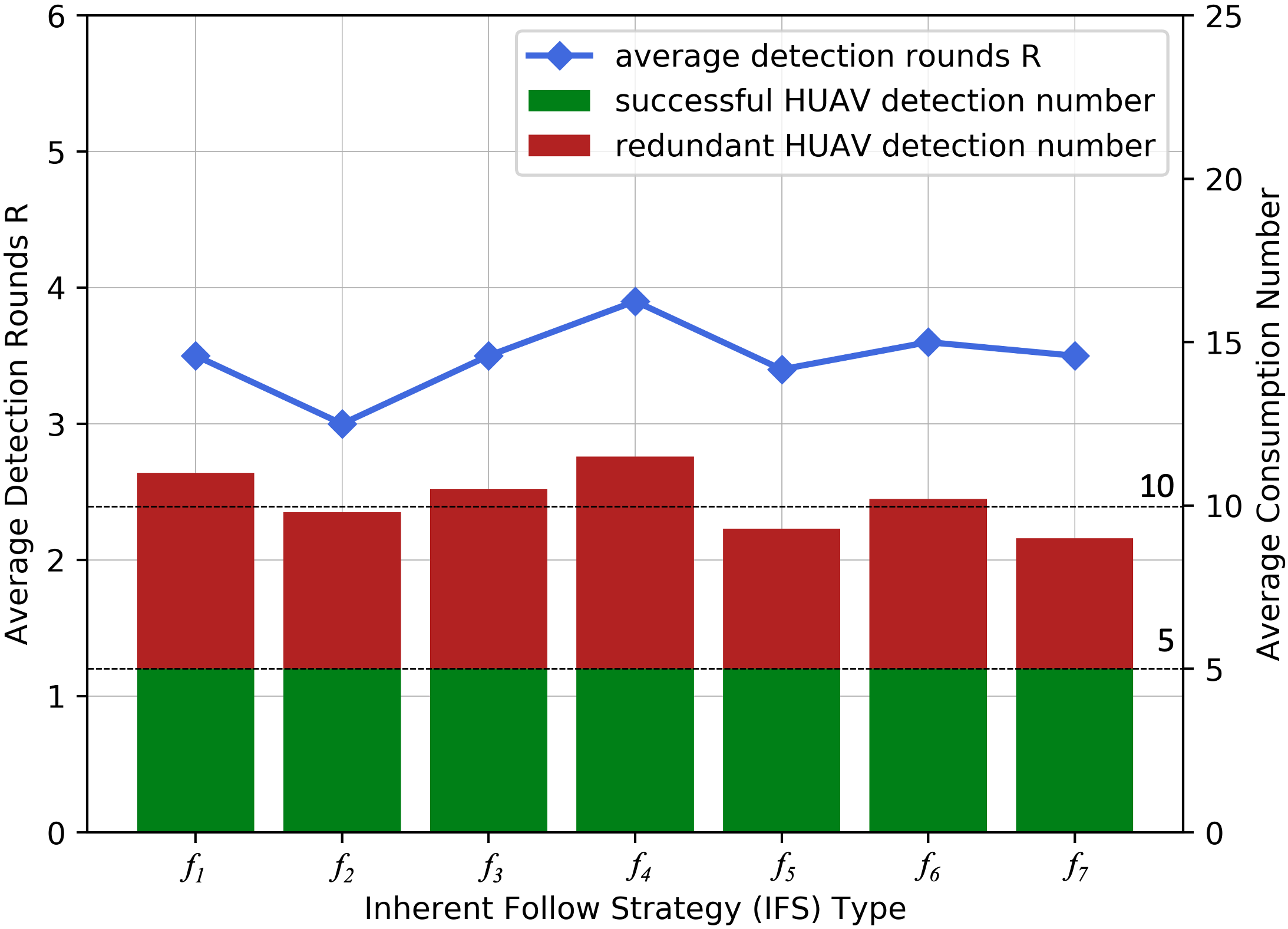}}
	\subfigure[The USNET obeys the IFS $f_1$ and has  $N=82$ UAVs.]{
		\label{fig:2} 
		\includegraphics[width=85mm]{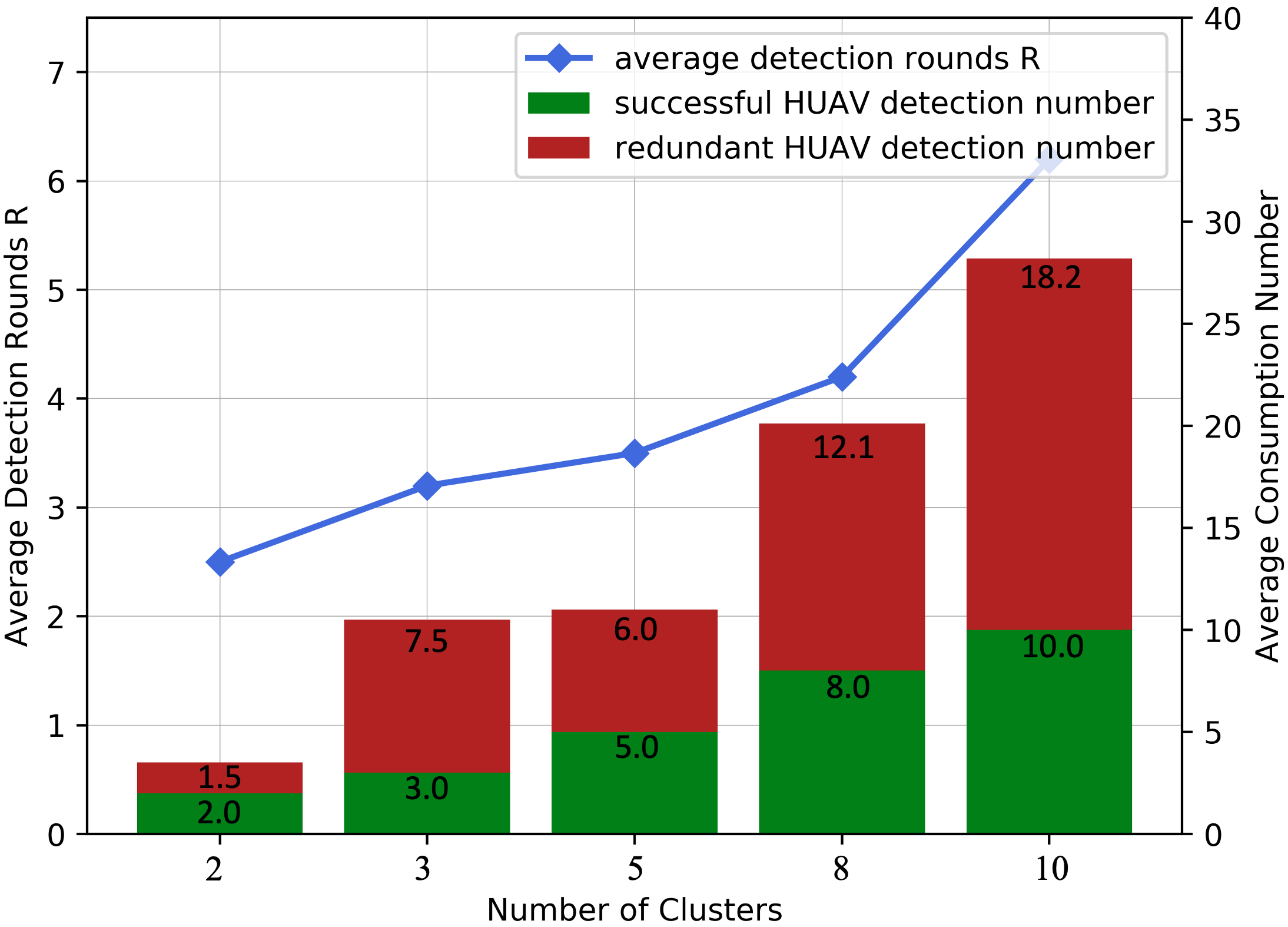}}
	
	\subfigure[The USNET obeys the IFS $f_3$ and has $M=5$ clusters.]{
		\label{fig:3} 
		\includegraphics[width=85mm]{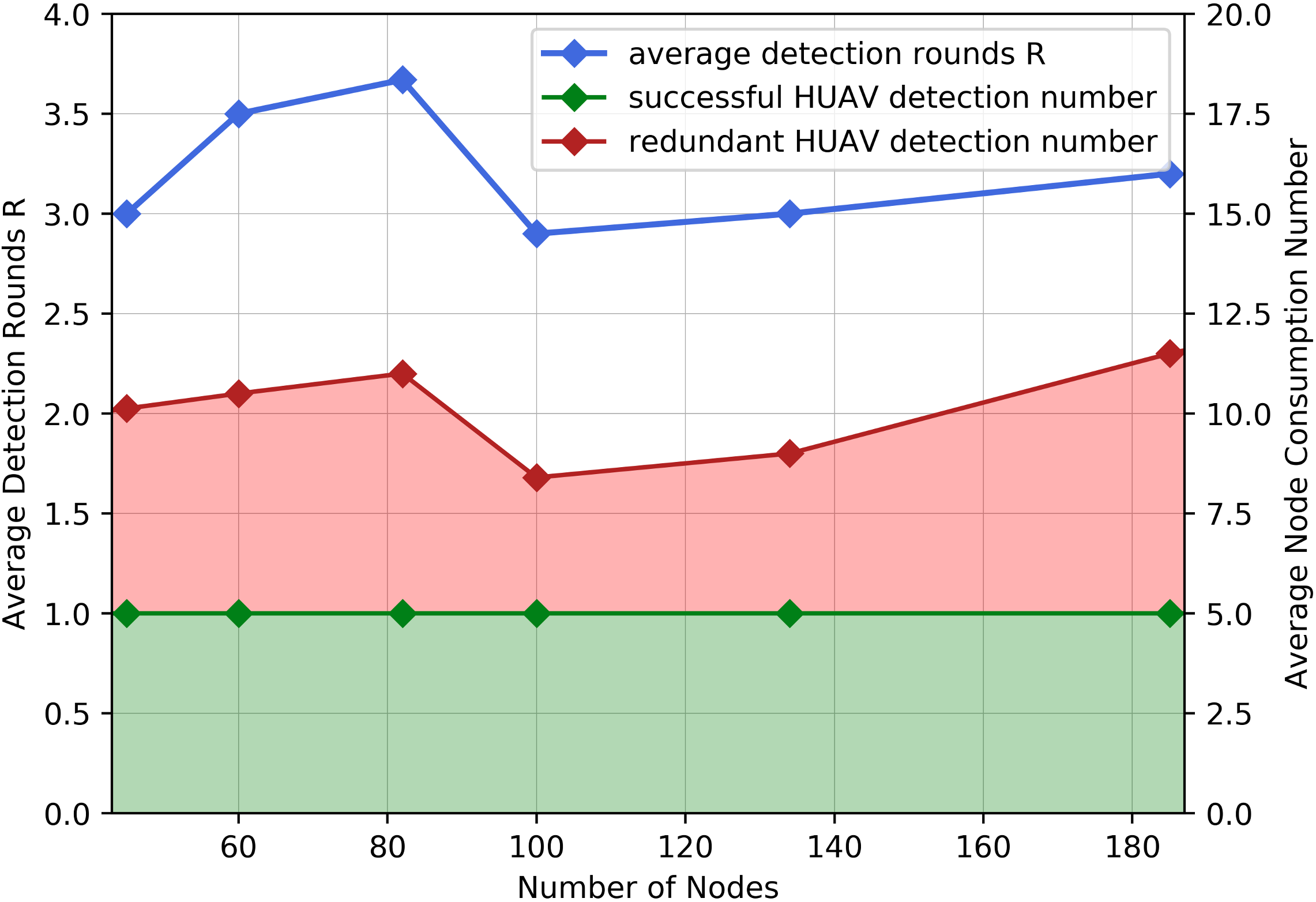}}
	\caption{The average detection results of $(\mathbf{P2})$ using MC-GASSL.}
	\label{fig:multi-cluster-results}
\end{figure*}

\subsection{Simulation Results of $(\mathbf{P2})$ with MC-GASSL}
\subsubsection{Off-line Meta Learning of the GRU Network}
\reffig{fig:meta-loss} shows the loss function during the off-line meta learning process. We can see that the loss function decreases with the learning episodes and converge to the value about 0.125. This indicates that the meta learning makes the GRU network cluster the USNET with various type of IFS within small metric learning loss. \reffig{fig:meta-compare-loss} shows the loss function of the on-line metric learning process using the GRU network initialized with meta parameters, pre-trained parameters \cite{pre-trained} and random parameters, respectively. We can see that the starting point of the loss function with meta parameters is much smaller than that with pre-trained parameters and random parameters. The loss functions of all three kinds of  parameters drops as the metric learning processes. Nonetheless, the loss function of meta parameters is always smaller than the other two loss functions and converges to a lower value. Hence, the off-line meta learning can help the GRU network find better initialized parameters and improve the performance of on-line metric learning.

\subsubsection{On-line Metric Learning of the GRU Network}


We construct USNETs obeying IFSs $f_1$ to $f_7$, respectively, and compare the clustering performance between the metric learning\footnote{The GRU network is initialized with meta parameters.} and other traditional clustering algorithms, including K-Means, symmetric non-negative matrix factorization (SNMF) \cite{nmf} and spectral clustering \cite{spectral_clustering}. 
\reffig{fig:purity-compare} shows the clustering purities of these four algorithms. We can see that the average clustering purities of metric learning  exceed those of other algorithms in all seven cases, which indicates the effectiveness of the metric learning in clustering the USNET.

\reffig{fig:feature-space} shows an example of the UAVs' feature changing during the metric learning process, where the USNET has $M=5$ clusters with a total of $N=237$ UAVs, obeying the IFS $f=f_1$. As shown in \reffig{fig:feature-space:1}, the initial features of UAVs in different clusters are mixed together and can hardly be clustered with traditional cluster methods, such as K-Means. Nonetheless, as the metric learning processes, features of UAVs in distinct clusters move away from each other, while the features of UAVs in the same clusters gradually converge together, as displayed in \reffig{fig:feature-space:2}. Hence, the metric learning can map the history positions and speeds of UAVs to a proper feature space, where traditional K-Means method can cluster the USNET easily.

\subsubsection{Cluster Head Detection $(\mathbf{P2})$ with MC-GASSL}
We construct a USNET consisting of $M=5$ UAV clusters with a total of $N=82$ UAVs.
We use the MC-GASSL to separately detect the cluster heads when the constructed USNET obeys IFS $f_1$ to $f_7$, and the detection results are shown in \reffig{fig:1}. We can see that the MC-GASSL can detect all $5$ HUAVs with an average of $R=3$ to $R=4$ detection rounds regardless of the types of the obeyed IFS. The number of successful detected HUAVs is $5$ in all seven cases, where the detection redundancy is about $\frac{5}{82}=6.1\%\ll 1$.
Hence, the MC-GASSL is effective in detecting cluster heads when the USNET obeys different IFSs. 

We construct USNETs with $M=2,3,5,8,10$ clusters that obey the same IFS $f=f_1$ and has a total of $N=82$ UAVs. We detect the cluster heads of the USNET with MC-GASSL, and the detection results are shown in \reffig{fig:2}. We can see that the number of average detection rounds as well as the detection redundancy increase with the number of UAV clusters. Nonetheless, the MC-GASSL can detect all the HUAVs successfully within an average of $R=7$ detection rounds, and the average detection redundancy is smaller than $\frac{18.2}{82}=22.2\%$. Hence, the MC-GASSL is effective in detecting cluster heads of USNETs with various UAV clusters.

We construct USNETs with identical number of clusters $M=5$, obeying the same type of IFS $f=f_3$, but with different number of UAVs. We utilize the MC-GASSL to detect the cluster heads of USNETs, and the detection results are shown in \reffig{fig:3}. We can see that the MC-GASSL can detect all $5$ HUAVs after an average of $3$ to $4$ detection rounds regardless of the total number of UAVs. The number of successful detected HUAV is $5$, while the number of redundant HUAV is about $5$ in all cases. Hence, the MC-GASSL is effective in detecting cluster heads of USNETs with various number of total UAVs.

\begin{figure*}[t]
	\centering
	\subfigure[Ground truth of the initial USNET.]{
		\label{fig:mul} 
		\includegraphics[width=85mm]{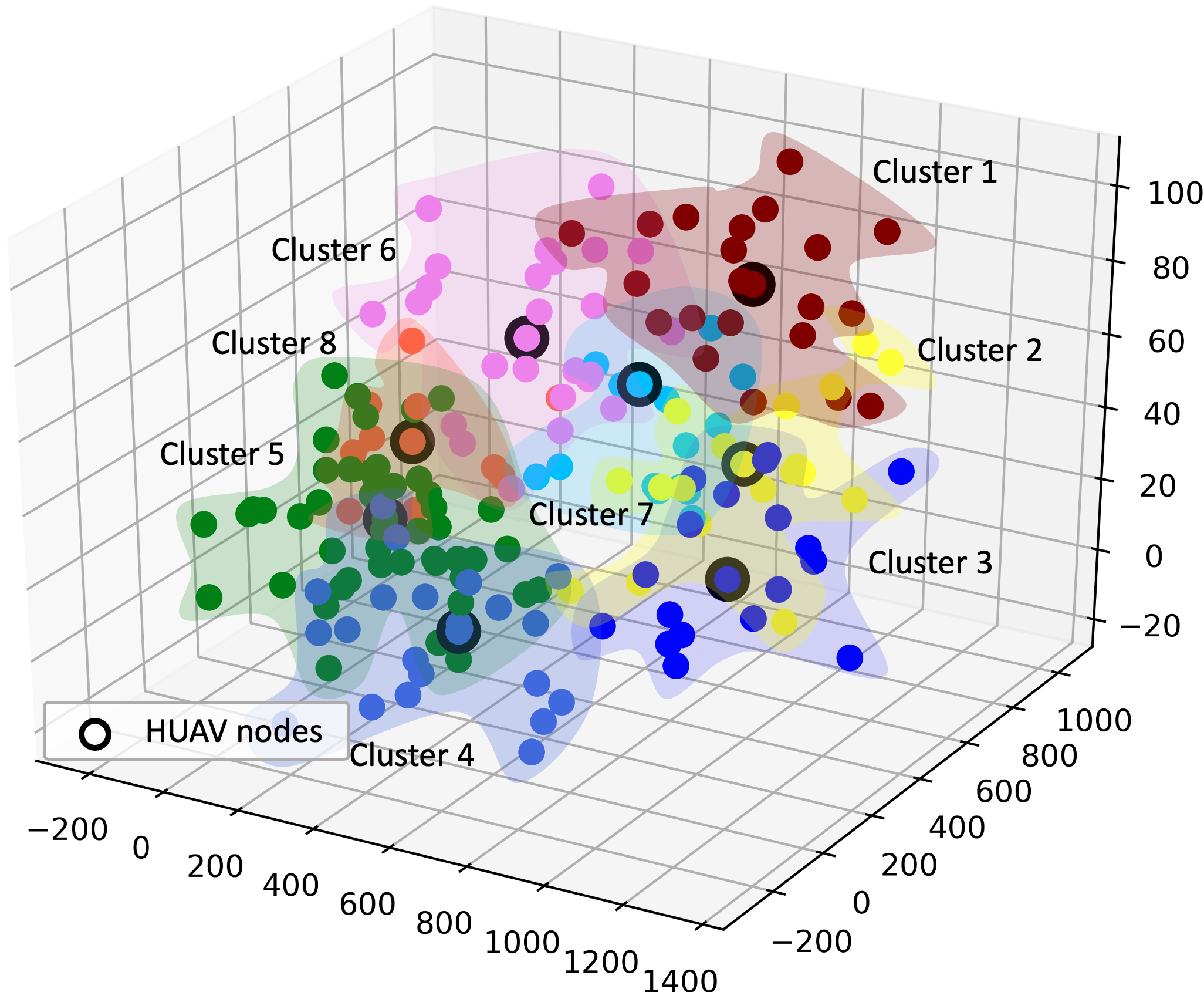}}
	\subfigure[Detection round $r=1$, successfully find $6$ HUAVs with $2$ redundant UAVs.]{
		\label{fig:mul-1} 
		\includegraphics[width=85mm]{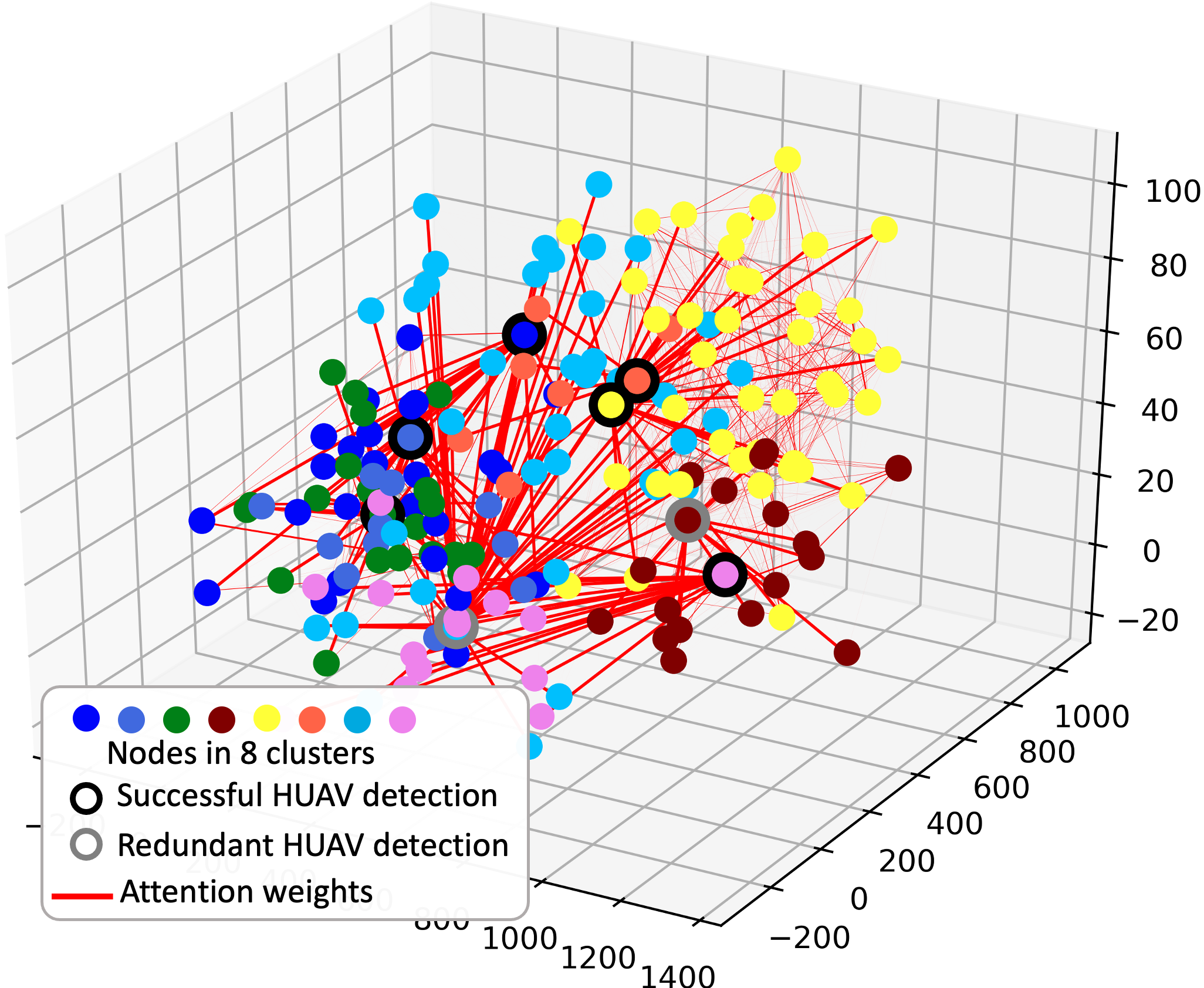}}
	
	\subfigure[Detection round $r=2$, successfully find the remaining two HUAVs.]{
		\label{fig:mul-2} 
		\includegraphics[width=85mm]{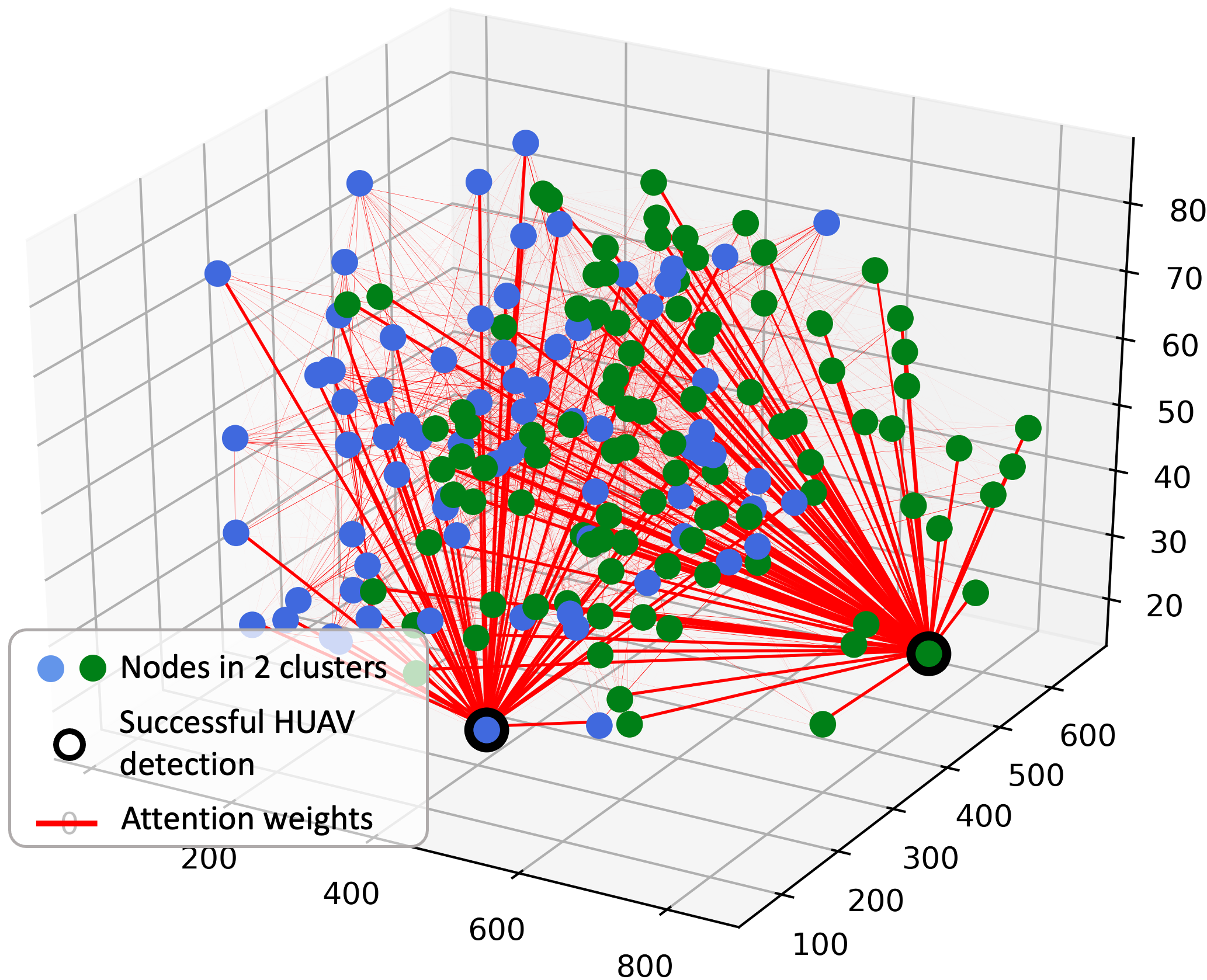}}
	\subfigure[HUAV detection rates during the detection rounds, where the total detection rate reaches 100\% after $R=2$ rounds.]{
		\label{fig:mul-3} 
		\includegraphics[width=85mm]{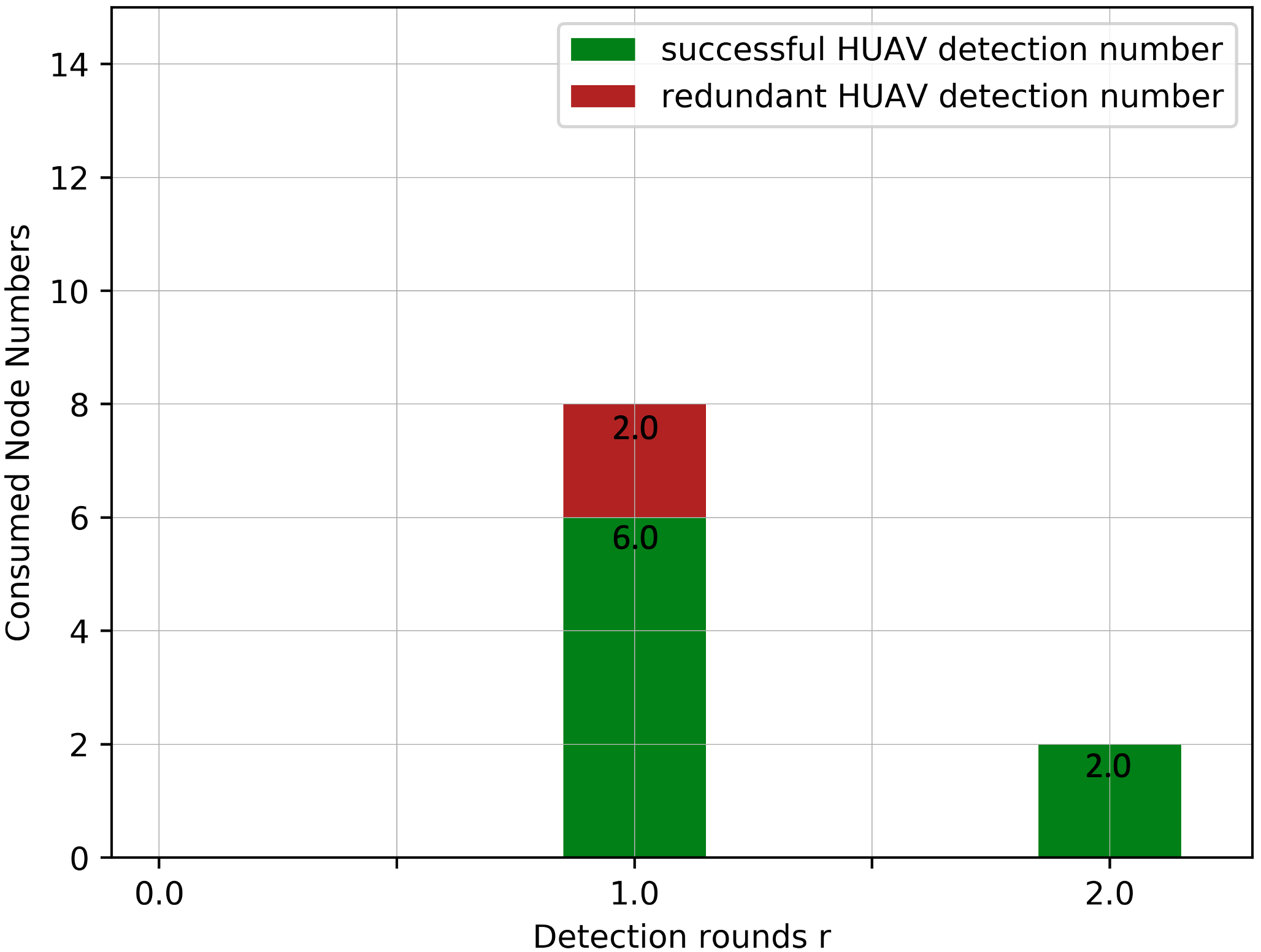}}
	
	\caption{The detection process of the USNET with MC-GASSL.}
	\label{fig:mul-cluster}
\end{figure*}

Take the case where $f=f_1$ and $M=8$ as an example. The total number of UAVs in the USNET is $185$. \reffig{fig:mul-cluster} shows the cluster head detection process with MC-GASSL, where \reffig{fig:mul} represents the ground truth of the USNET, \reffig{fig:mul-1} and \reffig{fig:mul-2} shows the attention weights and detected HUAVs during the first and second detection round, respectively, and \reffig{fig:mul-3} shows the detection rates during the whole detection process. We can see that the MC-GASSL in the first round detects $8$ UAVs, of which $6$ UAVs are correct HUAVs and $2$ UAVs are false detections. The MC-GASSL finds all the remaining HUAVs in the second round. The total consumption of UAVs is about $5.40\%$, where the successful detection rate is $4.32\%$ and redundant detection rate is $1.08\%$.

\section{Conclusions}
\label{section:conclusions}
In this paper, we study the cluster head detection problem of a two-level USNET with multiple clusters, where the IFS is unknown. We propose a GASSL to detect the hierarchical structures of the USNET composed of a single cluster. Specifically, the GASSL can find the HUAVs through calculating the attention values of each UAVs received from other UAVs and can approximately fit the IFS at the same time.
Then, we extend the GASSL to MC-GASSL to detect the hierarchical structure of the USNET with multiple number of UAV clusters.
The numerical results show that the GASSL can find the HUAVs under various kinds of IFSs with over 98\% average accuracy. The simulation results also show that the clustering purity of the USNET with MC-GASSL exceeds that with traditional clustering algorithms, and the MC-GASSL can find all the HUAVs efficiently with low detection consumptions.


%

\appendices
\section{Further Illustrations on $f(\cdot)$ and $T_0$}
\label{appendix:A}
\subsection{Function Form of $f(\cdot)$}
\label{appendix:A-1}
The form of $f(\mathbf{P}_{i,t+1},\mathbf{V}_{i_L,t}, \mathbf{p}_{i_L,t-T_0+1})$ can represent a large range of functions with independent variables $\mathbf{P}_{i,t+1}$, $\mathbf{V}_{i_L,t}$ and $\mathbf{p}_{i_L,t-T_0+1}$. Note that the reason of using $\mathbf{V}_{i_L,t}$ instead of $\mathbf{V}_{i_L,t+1}$ is that the behaviors of FUAVs are usually delayed relative to the behaviors of HUAVs. Nonetheless, one can define an IFS $f(\cdot)$ with input $\mathbf{V}_{i_L,t+1}$ if the delay can be ignored, and the proposed algorithms will still work.  Moreover, the reason of using $\mathbf{p}_{i_L,t-T_0+1}$ instead of $\mathbf{P}_{i_L,t+1}$ is that the position of HUAV $i'$ from time step $t-T_0+2$ to $t+1$ can be derived with $\mathbf{V}_{i_L,t}$ and $\mathbf{p}_{i_L,t-T_0+1}$. Specifically, the position of HUAV $i'$ at time step $\tau\in\{t-T_0+2,...,t,t+1\}$ can be calculated as $\mathbf{p}_{i_L,\tau}=\mathbf{p}_{i_L,t-T_0+1}+\sum_{\tau'=t-T_0+1}^{\tau-1}\mathbf{v}_{i_L,\tau'}$.
\subsection{Value of $T_0$ and Redundant Variables of $f(\cdot)$ }
\label{appendix:A-2}
In practice, we may not know the exact value of $T_0$. To make sure that the form of $f(\cdot)$ in \eqref{equ:vf} can represent the truth inherent follow strategy of the USNET, we can endow $T_0$ with a large value based on prior experience. Therefore, there may be superfluous variable in the inputs of $f(\cdot)$. For example, we endow $T_0$ with $2$, while FUAV $i$ only follows the speed of HUAV $i_L$ in the previous time step, i.e.,
\begin{align}
	\mathbf{v}_{i,t+1}&=f(\mathbf{P}_{i,t+1},\mathbf{V}_{i_L,t}, \mathbf{p}_{i_L,t-T_0+1})\notag\\
	&=f(\mathbf{p}_{i,t+1},\mathbf{p}_{i,t}, \mathbf{v}_{i_L,t},\mathbf{v}_{i_L,t-1},\mathbf{p}_{i_L,t-1})\notag\\
	&=\mathbf{v}_{i_L,t}.
\end{align} 
Under such circumstances,  $\mathbf{p}_{i,t+1},\mathbf{p}_{i,t}, \mathbf{v}_{i_L,t-1}$ and $\mathbf{p}_{i_L,t-1}$ are all redundant variables for $f(\cdot)$. The attention heads dealing with $\mathbf{v}_{i_L,t}$, $\mathbf{v}_{i_L,t-1}$ and $\mathbf{p}_{i_L,t-1}$ produces the attention values to HUAV $i'$ as $\alpha_{i,i_L,1}$, $\alpha_{i,i_L,2}$ and $\alpha_{i,i_L,3}$, respectively.
When $\widehat{f}(\cdot)$ approximates $f(\cdot)$, we can reduce the loss function $\mathcal{L}(\mathbf{\Theta})$ towards zero even with random $\alpha_{i,i_L,2}$ and $\alpha_{i,i_L,3}$.

\begin{figure*}
	\setlength{\abovecaptionskip}{0.5cm}
	\centering
	\subfigure[$m_1=2,N=3$.]{
		\label{fig:simulations:3-1} 
		\includegraphics[width=80mm]{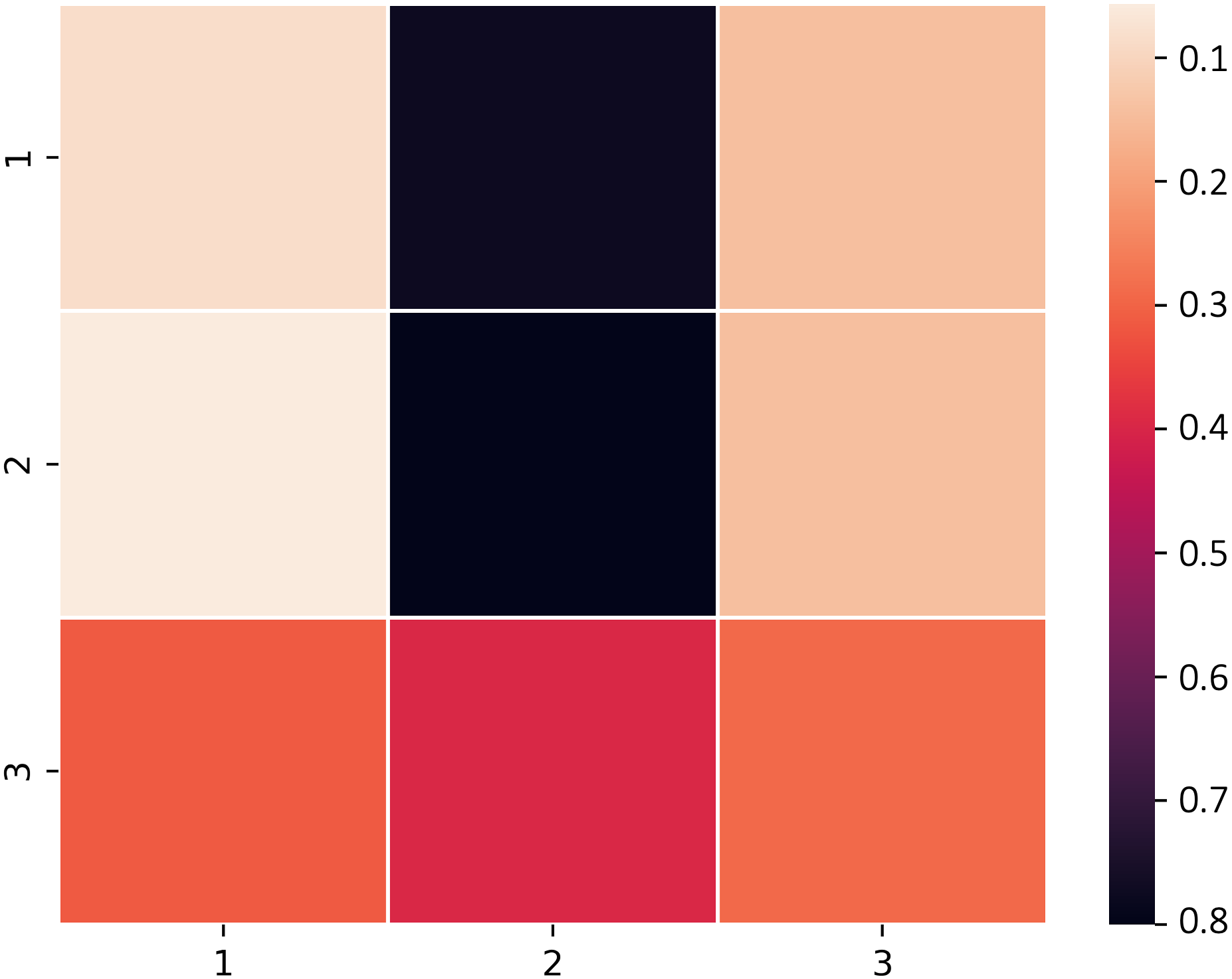}}
	\subfigure[$m_1=5,N=6$.]{
		\label{fig:simulations:3-2} 
		\includegraphics[width=80mm]{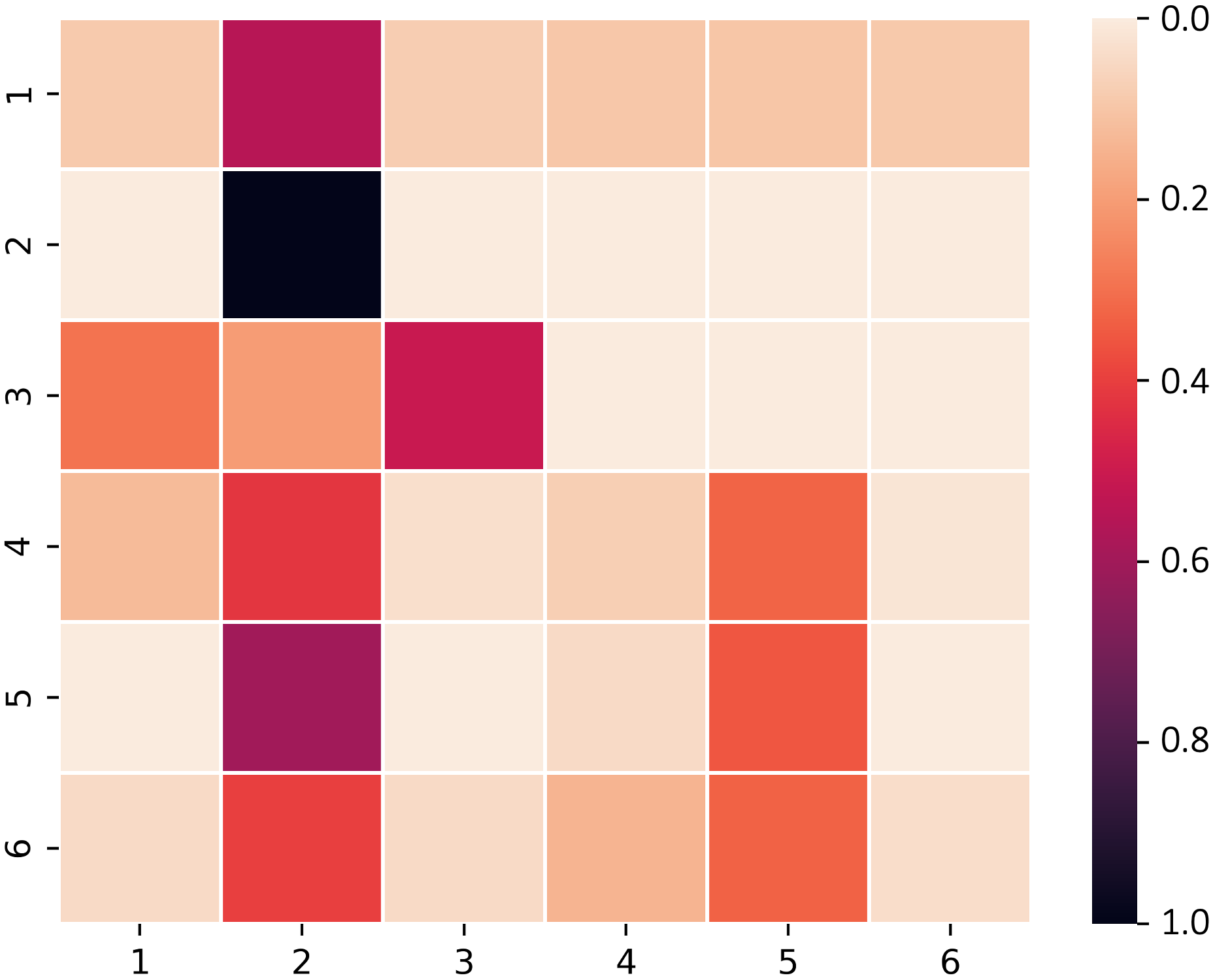}}
	
	\subfigure[$m_1=10,N=11$.]{
		\label{fig:simulations:3-3} 
		\includegraphics[width=80mm]{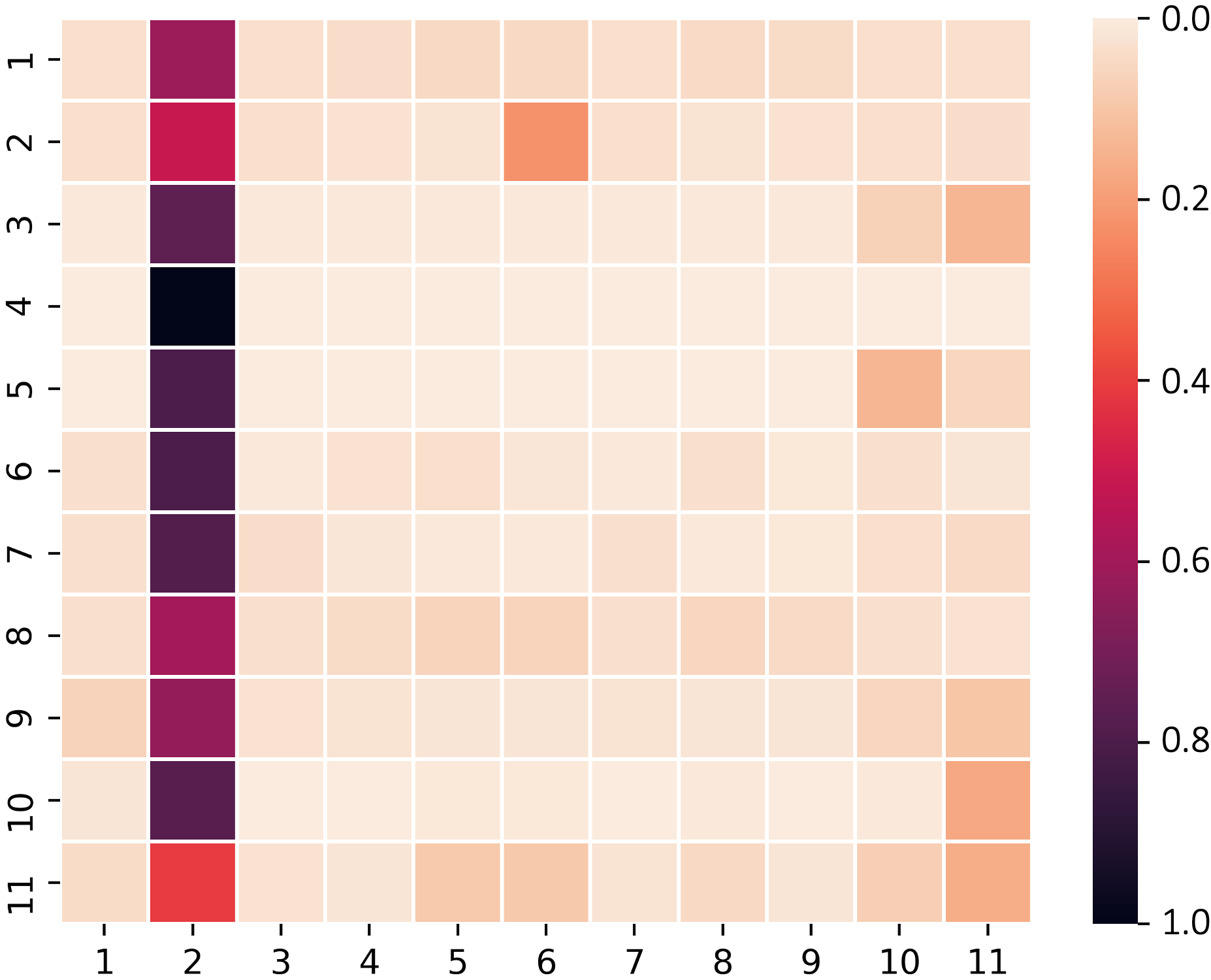}}
	\subfigure[$m_1=25,N=26$.]{
		\label{fig:simulations:3-4} 
		\includegraphics[width=80mm]{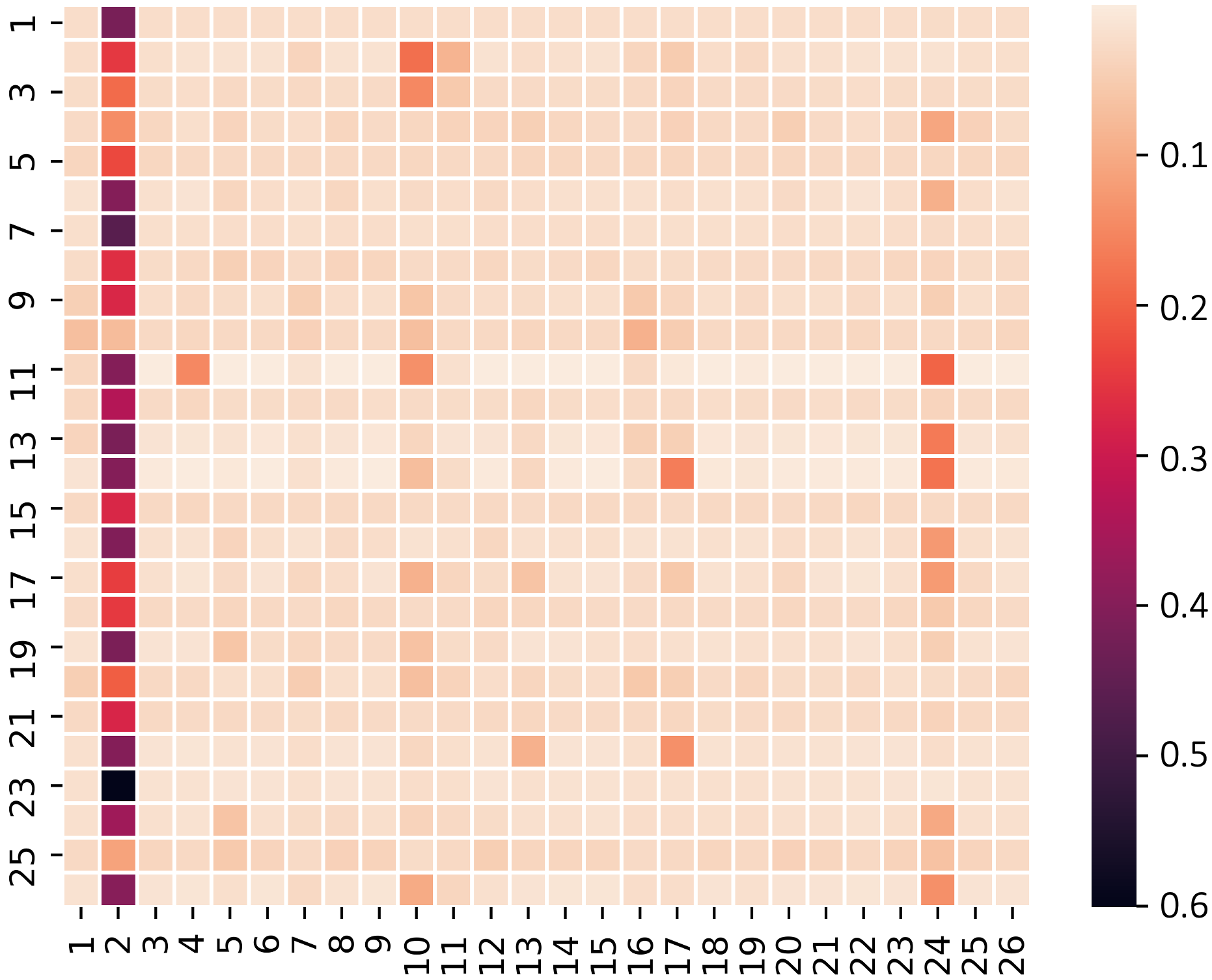}}
	
	\subfigure[$m_1=40,N=41$.]{
		\label{fig:simulations:3-5} 
		\includegraphics[width=80mm]{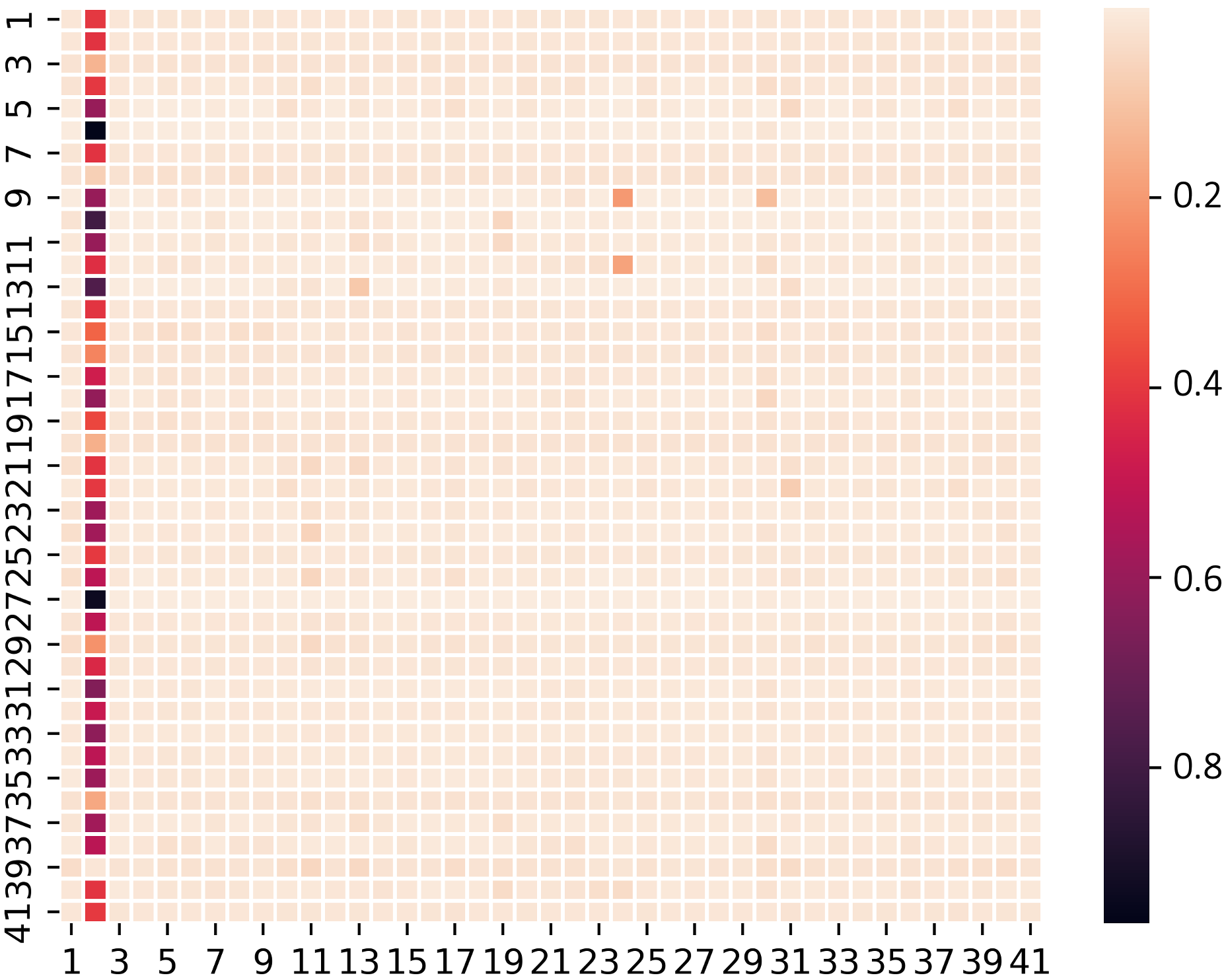}}
	\subfigure[$m_1=50,N=51$.]{
		\label{fig:simulations:3-6} 
		\includegraphics[width=80mm]{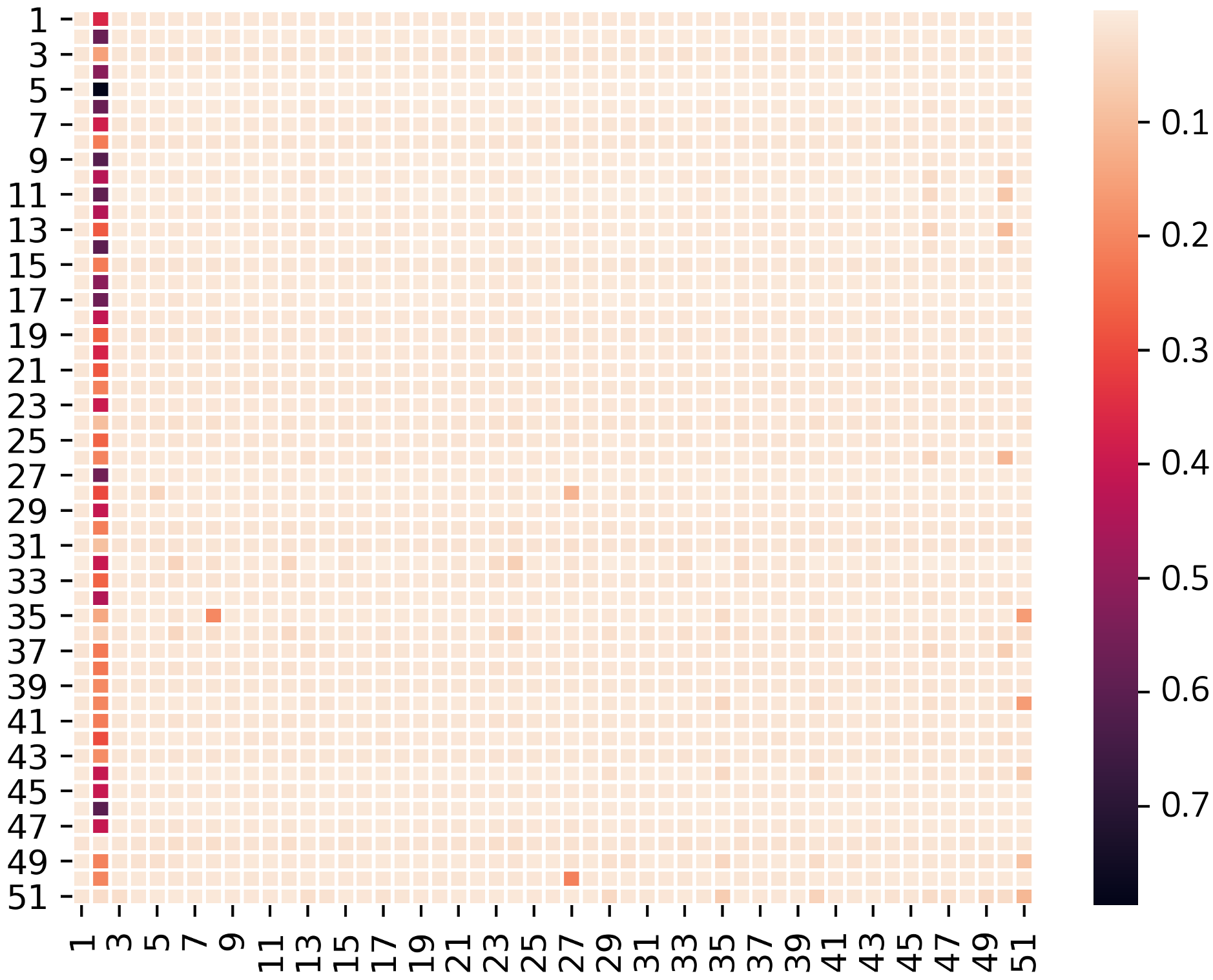}}
	
	\caption{Heat map of attentions weights, where $f=f_7$ and $i_L=2$.}
	\label{fig:simulation:attention-2}
\end{figure*}

\section{Design of $\mathbf{L}^s(\mathbf{W})$}
\label{appendix:loss_design}
The design process of $\mathbf{L}^s(\mathbf{W})$ is inspired from the design of the triplet loss function \cite{triplet-loss}. Specifically, the distance between the extracted features of anchor UAV and positive UAV should be smaller than the distance between the extracted features of anchor UAV and negative UAV, i.e.,
\begin{align}
	\label{equ:small}
	\left\|\mathbf{e}_{i_a,t}-\mathbf{e}_{i_p,t}\right\|_2\le \left\|\mathbf{e}_{i_a,t}-\mathbf{e}_{i_n,t}\right\|_2.
\end{align}
To avoid the trivial solution $\mathbf{e}_{i_a,t}=\mathbf{e}_{i_p,t}=\mathbf{e}_{i_n,t}$, we augment \eqref{equ:small} by
\begin{align}
	\label{equ:small_2}
	\left\|\mathbf{e}_{i_a,t}-\mathbf{e}_{i_p,t}\right\|_2- \left\|\mathbf{e}_{i_a,t}-\mathbf{e}_{i_n,t}\right\|_2\le -\gamma,
\end{align}
where $\gamma>0$. Note that we only require the extracted features to satisfy \eqref{equ:small_2}, which means $\mathcal{L}^s(\mathbf{W})$ is $0$ when $\left\|\mathbf{e}_{i_a,t}-\mathbf{e}_{i_p,t}\right\|_2- \left\|\mathbf{e}_{i_a,t}-\mathbf{e}_{i_n,t}\right\|_2+\gamma\le 0$. Hence, the loss function $\mathcal{L}^s(\mathbf{W})$ is designed as:
\begin{align}
	\label{equ:loss}
	\mathcal{L}^s(\mathbf{W})=\bigg[\left\|\mathbf{e}_{i_p,t}-\mathbf{e}_{i_a,t}\right\|_2-\left\|\mathbf{e}_{i_n,t}-\mathbf{e}_{i_a,t}\right\|_2+\gamma\bigg]_+.
\end{align}

\section{Examples of the Heat Map of the  GASSL}
\label{appendix:heat-map}

\reffig{fig:simulation:attention-2} shows the heat map of the attention weight in different number of FUAVs $m_1$, where the IFS $f(\cdot)$ is an MLP and the index of HUAV $i_L$ is $2$, i.e.,  $f=f_7$, and $i_L=2$. We can see that the GASSL is able to find the HUAV $i_L$ in all cases.



\ifCLASSOPTIONcaptionsoff
  \newpage
\fi

\end{document}